\newtheorem{theorem}{Theorem}[section]
\newtheorem{corollary}{Corollary}[theorem]
\newtheorem{lemma}[theorem]{Lemma}
\newtheorem{assumption}[theorem]{Assumption}
\definecolor{Gray}{gray}{0.9}
\def\Figref#1{Figure~\ref{#1}}
\def\eqref#1{(\ref{#1})}
\def\1{\bm{1}}
\DeclareMathAlphabet{\mathsfit}{\encodingdefault}{\sfdefault}{m}{sl}
\SetMathAlphabet{\mathsfit}{bold}{\encodingdefault}{\sfdefault}{bx}{n}
\def\gD{{\mathcal{D}}}
\def\gE{{\mathcal{E}}}
\def\gG{{\mathcal{G}}}
\def\gL{{\mathcal{L}}}
\def\gX{{\mathcal{X}}}
\def\gY{{\mathcal{Y}}}
\newcommand{\E}{\mathbb{E}}
\newcommand{\R}{\mathbb{R}}
\DeclareMathOperator*{\argmax}{arg\,max}
\DeclareMathOperator*{\argmin}{arg\,min}
\title{Improving self-training under distribution shifts via anchored confidence with theoretical guarantees}
\author{Taejong Joo \& Diego Klabjan \\ 
Department of Industrial Engineering \& Management Sciences \\
Northwestern University \\
Evanston, IL, USA \\
\texttt{\{taejong.joo,d-klabjan\}@northwestern.edu}}
\begin{document}

\maketitle

\begin{abstract}
    Self-training often falls short under distribution shifts due to an increased discrepancy between prediction confidence and actual accuracy. This typically necessitates computationally demanding methods such as neighborhood or ensemble-based label corrections. Drawing inspiration from insights on early learning regularization, we develop a principled method to improve self-training under distribution shifts based on temporal consistency. Specifically, we build an uncertainty-aware temporal ensemble with a simple relative thresholding. Then, this ensemble smooths noisy pseudo labels to promote selective temporal consistency. We show that our temporal ensemble is asymptotically correct and our label smoothing technique can reduce the optimality gap of self-training. Our extensive experiments validate that our approach consistently improves self-training performances by 8\% to 16\% across diverse distribution shift scenarios without a computational overhead. Besides, our method exhibits attractive properties, such as improved calibration performance and robustness to different hyperparameter choices. 
\end{abstract}

\section{Introduction} \label{sec:introduction}
In this work, we address the challenge of adapting pre-trained neural networks at test time under distribution shifts, a problem known as test-time adaptation (TTA) or source-free domain adaptation (SFDA). Distribution shifts—where a model trained on one distribution is then tested on a different one—are ubiquitous in many practical scenarios due to demographic subpopulation shift \citep{buolamwini2018gender} and changes in data collection environments \citep{recht2019imagenet, jean2016combining, beery2018recognition}. Despite the robust performance of neural networks under independent and identically distributed (i.i.d.) settings, they often suffer from substantial performance degradation under such shifts \citep{taori2020measuring, hendrycks2019benchmarking}. Recently, TTA and SFDA have proven their effectiveness in resolving these critical issues by effectively leveraging information about the distribution shifts contained in unlabeled samples given at the test time.

Self-training is the basis of many state-of-the-art methods in TTA and SFDA \citep{liu2021cycle, yi2023source, berthelot2021adamatch}, utilizing pseudo labels generated from the model’s own predictions to train a model on unlabeled samples \citep{lee2013pseudo}. Since pseudo labels are regarded as true labels in self-training, the success of self-training methods highly depends on how to filter incorrect pseudo labels to prevent self-confirmation bias \citep{arazo2020pseudo}. This issue has been effectively handled via simple confidence-based thresholding in i.i.d. settings \citep{sohn2020fixmatch, zhang2021flexmatch, wang2022freematch}. However, the distribution shifts make it hard to filter incorrect pseudo labels due to high noise rates even under high threshold \citep{ovadia2019can}. Thus, sophisticated methods filter incorrect pseudo labels based on a neighborhood structure of the data \citep{liang2020we, yang2021exploiting, yang2021generalized} and consistency of multiple predictions under different models \citep{yeh2022boosting} or augmentations \citep{karim2023c}(cf. Section \ref{sec:literature}), which are computationally intensive by nature.

Recent insights in \cite{yi2023source} suggest an alternative strategy can be also effective:
promoting temporal consistency can enhance self-training performance in SFDA without the computational burden of previous methods. The temporal consistency regularizer, so called early learning regularization (ELR) \citep{liu2020early}, was originally developed to address neural networks' tendency to learn clean information first and then gradually memorize noisy labels \citep{zhang2021understanding, arpit2017closer}; this setting is naturally connected to self-training scenarios when we regard the pseudo labels as random noisy labels. However, the impacts of ELR on self-training have not been fully understood. Also, since ELR does not consider unique characteristics of distribution shifts, we aim to answer the following question: \textit{Is there any principled way to improve the way of memorizing all past predictions tailored self-training under distribution shifts?} 

In this work, we show that the answer is affirmative by proposing Anchored Confidence (\texttt{AnCon}) that uses \textit{confident predictions} to support a generalized notion of \textit{temporal consistency}. Specifically, we construct a generalized temporal ensemble, which weighs predictions based on predictive uncertainty, and then use the ensemble as a smoothing vector in label smoothing \citep{szegedy2016rethinking}. Then, through rigorous theoretical analyses, we show that our simple heuristic for the generalized temporal ensemble is asymptotically correct and that the label smoothing formulation can reduce the optimality gap. As a result, \texttt{AnCon} can correct wrong pseudo labels without expensive computations and can be easily applied to self-training methods by replacing one-hot pseudo labels with smooth pseudo labels, unlike neighborhood-based or centroid methods. Through extensive experiments, we show that \texttt{AnCon} improves self-training under diverse distribution shift scenarios and posses many attractive properties.

Our contribution can be summarized as follows:
\textbf{1)} We develop \texttt{AnCon}, which is the first algorithm that attempts to improve self-training under distribution shifts by generalizing a notion of temporal consistency with theoretical guarantees;
\textbf{2)} Without any additional forward passes or neighborhood search, \texttt{AnCon} improves self-training performances 8\% and 16\% under domain shifts and image corruptions, respectively;
\textbf{3)} Remarkably, we also show that \texttt{AnCon} significantly improves calibration performance and is robust with respect to model selection methods and hyperparameter choices.

\section{Background}  \label{sec:background}
\textbf{Notation and setup }
For an input space $\gX \subseteq \R^d$ and a label space $\gY = [K] := \{1, 2, \cdots, K \}$, we define $X$ and $Y$ to be random variables of input and output with probability densities $p_{X}$ and $p_{Y}$, respectively. We also define a neural network $f(\cdot; \theta): \gX \rightarrow \triangle^{K-1}$ parameterized by a parameter $\theta \in \R^p$ where $\triangle^{K-1}$ is the probability simplex with $K$ elements. Our goal is to minimize the cross-entropy loss $\min_{\theta \in \R^p} l(\theta) :=  \E_{XY} [H(f(X; \theta), p_{Y|X})] $ where $H(f(x; \theta), p_{Y|X=x}) := - \sum_{k \in [K]} p(Y=k|x) \log f_k(x; \theta) $. In the SFDA setting, we are given an initial parameter $\theta_0$ that is trained on a different data generating distribution $(X^\prime, Y^\prime)$, e.g., $\theta_0 \in \argmin_{\theta \in \R^p} \E_{X^\prime Y^\prime} [H(f(X^\prime; \theta), p_{Y^\prime|X^\prime})]$. We can think about this setting as either (1) $(X^\prime, Y^\prime)$ being pre-training data and $\theta_0$ the foundation model with the task of fine-tuning the model on unlabeled $X$ or (2) a transfer learning problem with $(X^\prime, Y^\prime)$ being the source domain data and $X$ the target domain. Here, we assume only covariate shift without concept shift; that is, $p_X \neq^D p_{X^\prime} $ but $p_{Y|X} =^D p_{Y^\prime|X^\prime} $. Even in this case, we note that suboptimality under the distribution shift, i.e., $\min_{\theta \in \R^p} l(\theta) - l(\theta_0)$, can be large.

\textbf{Self-training}
In this work, we tackle the distribution shifts by using the self-training method that replaces the true label $Y(x)$ by the pseudo label, i.e., $ \min_{\theta_{new}} \hat{l}(\theta_{new}; \theta) = \E_{X}[- \log f_{\hat{Y}(x; \theta)}(x; \theta_{new})]$ where $\hat{Y}(x; \theta):= \argmax_{k \in [K]} f_k (x; \theta)$ is a pseudo label under $\theta$. Specifically, the algorithmic framework of self-training is as follows: given $\theta_0$, we iteratively find model parameter $\theta_{m+1}$ for $m = 0, 1, \cdots$ with $\theta_{m+1} \in \argmin_{\theta_{new}} \hat{l}(\theta_{new}; \theta_{m})$. For later use, we also define a prediction confidence $c(x; \theta) := \max_{k \in [K]} f_k(x; \theta)$ and $\theta_{0:m} := (\theta_0, \cdots, \theta_m )$.

\textbf{Early learning regularization }
In the learning from noisy labels (LFN) scenario, \cite{liu2020early} identified an "early-learning phenomenon" where neural networks initially learn information contained in clean labels before gradually memorizing noisy labels, leading to a performance deterioration as training progresses. To mitigate this issue, ELR penalizes predictions that deviate from earlier predictions by defining a target network from the past predictions: $\bar{f}_{ELR}(x; \theta_{0:m}) := \sum_{j=0}^{m} (1 - \gamma ) \cdot \gamma^{m-j} f(x; \theta_j)$. Then, adding an auxiliary loss of $L_{ELR}(\theta; \theta_{0:m}) = \E_{X} [\log (1 - f(X; \theta)^T \bar{f}_{ELR}(X; \theta_{0:m}))]$ to $\hat{l}(\theta; \theta_m)$ can prevent memorization of noisy labels while preserving correct patterns.

Notably, this insight has recently been confirmed to be applicable in the SFDA setting by \cite{yi2023source}. This observation is appealing because ELR can be efficiently implemented by reusing past predictions without additional forward passes or neighborhood searching, unlike dominant methods in SFDA \citep{liang2020we, yang2021exploiting, yang2021generalized, yeh2022boosting, karim2023c}. Nevertheless, given that ELR stems from a general property of the neural network training in the i.i.d. setting, herein we aim to step towards a more principled approach to encourage temporal consistency tailored for distribution shift scenarios.

\section{Anchored confidence}  
In this section, we introduce \textbf{\texttt{AnCon}}, which promotes the temporal consistency on selectively chosen predictions via label smoothing. In Section \ref{subsec:ls}, we first explain the idea of promoting selective temporal consistency based on confident predictions via label smoothing \citep{szegedy2016rethinking} with a temporal ensemble. Then, in Section \ref{subsec:gte}, we explain how to effectively construct temporal ensemble for improving self-training under distribution shifts. Finally, we theoretically analyze the efficacy of \texttt{AnCon} by drawing connection between our method and knowledge distillation in Section \ref{sec:kd_connection}.

\subsection{Selective temporal consistency via label smoothing} \label{subsec:ls}  \vskip -0.1in
In this work, we utilize label smoothing  \citep{szegedy2016rethinking} to promote the selective temporal consistency instead of using an auxiliary loss function like ELR. 
Specifically, given a generalized temporal ensemble $\bar{f}(x; \theta_{0:m}, \mathbf{w}_{0:m})$ with $\mathbf{w}_{0:m} := (w_0, \cdots, w_m)$ which will be specified in Section \ref{subsec:gte}, we construct a regularized pseudo label $\tilde{Y}(X; \theta_{0:m}, \mathbf{w}_{0:m})$ by using $\bar{f}(x; \theta_{0:m}, \mathbf{w}_{0:m})$ as a smoothing vector for the pseudo label $\hat{Y}(x; \theta_m)$. That is, we perform self-training by 
\begin{equation} \label{eq:label_smoothing}
    \min_{\theta} \E_X[H(f(X; \theta), \tilde{Y}(X; \theta_{0:m}, \mathbf{w}_{0:m}))], \quad \tilde{Y}(X; \theta_{0:m}, \mathbf{w}_{0:m}) = (1 - \lambda) E_1(\hat{Y}(x; \theta_m)) + \lambda \bar{f}(x; \theta_{0:m}, \mathbf{w}_{0:m})
\end{equation}
where $\lambda \in [0,1]$ is a coefficient, $E_1(\cdot)$ is the one-hot encoding, and $\bar{f}(x) := (\bar{f}_1(x), \cdots, \bar{f}_K(x))$ is the $K$-dimensional output of the generalized temporal ensemble (cf. \eqref{eq:teacher_pred}).

Thus, \texttt{AnCon} can control the usage of potentially noisy information in $\hat{Y}(x; \theta_m)$ based on its consistency with $\bar{f}(x; \theta_{0:m}, \mathbf{w}_{0:m})$. Not only can this approach preserve the early learning phenomenon as in ELR (cf. Section \ref{sec:background}), but the label smoothing formulation also significantly stabilizes the self-training performance under different hyperparameter choices due to the fact that the optimal values of hyperparameters are less problem dependent compared to its equivalent auxiliary regularization \citep{pereyra2017regularizing}. Beyond removing the burden of hyperparameter search, this robustness is a particularly intriguing property under distribution shift scenarios where the model selection becomes a challenging task. 

Further, encouraging the temporal consistency through label smoothing enables us to connect our method with knowledge distillation (KD) \citep{hinton2015distilling}, which can provide a wide range of principled techniques and theoretical results developed in KD. As a concrete example, we will show when and how \texttt{AnCon} can reduce the optimality gap of self-training, i.e., a case with $\lambda = 0$, in Section \ref{sec:kd_connection}.

\subsection{Constructing an effective generalized temporal ensemble with prediction confidences} \label{subsec:gte}  \vskip -0.1in
Next, we construct the generalized temporal ensemble that makes the selective temporal consistency in \eqref{eq:label_smoothing} work effectively in self-training under distribution shifts. Specifically, given $\theta_{0:m}$ and weights $\mathbf{w}_{0:m}(x) \in \triangle^{m}$ for each $x \in \gX$, the prediction by the generalized temporal ensemble is 
\begin{equation} \label{eq:teacher_pred}
    \bar{f}_k(x; \theta_{0:m}, \mathbf{w}_{0:m}) := \sum_{i=0}^{m} w_i(x) \cdot p(y=k|x, \theta_i), \quad k \in [K]
\end{equation}
where $p(y|x, \theta_i)$ is the prediction made by $f(x; \theta_i)$, which can be either soft ($p(y = j | x, \theta_i) = f_j(x; \theta_i)$) or hard ($p(y = j | x, \theta_i) = 1$ if $j = \argmax_{k \in [K]} f_k(x; \theta_i)$ and $p(y = j | x, \theta_i) = 0$ otherwise). In this work, we use hard prediction because soft prediction puts more weights on recent predictions since self-training tends to keep increasing the prediction confidence during training.

Surprisingly, we will show that the following simple relative thresholding for determining $\mathbf{w}_{0:m}$ gives the asymptotic optimal weights achieving the minimum worst-case optimality gap of self-training:
\begin{equation} \label{eq:ema_thres}
    w_m(x) \propto \mathbf{1}(c(x; \theta_m) > \delta^{(\beta)}_m), \quad \delta^{(\beta)}_m := \sum_{i=0}^{m} (1 - \beta) \beta^{m-i} \hat{\E}_X[c(X; \theta_i)]
\end{equation} 
where $\delta^{(\beta)}_m $ is an exponential moving average (EMA) of prediction confidence with hyperparameter $\beta$ and $ \hat{\E}[c(X; \theta_i)] $ is a Monte-Carlo approximation of $ \E[c(X; \theta_i)] $ with mini-batch samples.

Intuitively, our weighting mechanism aggregates only relatively confident predictions with a uniform weight. Given the observation that relative ordering of confidence is highly correlated with accuracy even under distribution shifts \citep{yu2022robust, joo2023iw}, our thresholding rule would tend to put non-zero weights on correct predictions. Besides, by employing the relative criterion, the thresholding does not suffer from the problems that neglect predictions obtained in the early stage of training.

We also remark that \texttt{AnCon} has almost the same computational cost as ELR. Specifically, \eqref{eq:teacher_pred} can be implemented by $\bar{f}_k(x; \theta_{0:m}, \mathbf{w}_{0:m}) = \bar{f}_k(x; \theta_{0:m-1}, \mathbf{w}_{0:m-1}) + w_m(x) p(y=k|x, \theta_m)$ that requires to store the weighted sum of previous predictions without additional forward passes or storing previous parameters, which is the same as storing the previous logit vector in ELR. Similarly, \eqref{eq:ema_thres} can be efficiently implemented by $\delta^{(\beta)}_m  = \beta \delta^{(\beta)}_{m-1} + (1 - \beta) \hat{\E}_X[c(X; \theta_m)]$ that requires constant additional computational costs compared to vanilla self-training with the constant being small. Therefore, \texttt{AnCon} shares the same computational benefits as ELR compared to other state-of-the-art methods in SFDA.

\textbf{On optimality of the relative thresholding in \eqref{eq:ema_thres} }
From the optimization perspective, the optimal weights $\mathbf{w}_{0:m}^\dagger$ correspond to the weights under which self-training with $\tilde{Y}(X; \theta_{0:m}, \mathbf{w}_{0:m})$ can minimize the expected loss:
\begin{equation} \label{eq:find_weight}
    \mathbf{w}_{0:m}^\dagger \in \argmin_{\mathbf{w}_{0:m}}  l(\theta^{\dagger}_{\mathbf{w}_{0:m}}), 
    \quad \theta^{\dagger}_{\mathbf{w}_{0:m}} \in \argmin_{\theta} \hat{\E}_X [H (f(X; \theta), \tilde{Y}(X; \theta_{0:m}, \mathbf{w}_{0:m}) ) ]. 
\end{equation}

Unfortunately, $l(\theta^{\dagger}_{\mathbf{w}_{0:m}})$, or its empirical counterpart, is not available in self-training due to the absence of labels. Further, even if labels are given, solving \eqref{eq:find_weight} is intractable due to non-smoothness of $l(\theta^{\dagger}_{\mathbf{w}_{0:m}})$ with respect to $\mathbf{w}_{0:m}$ and the cost of finding $\theta^\dagger_{\mathbf{w}_{0:m}}$.

To circumvent this issue, we show in Section \ref{sec:kd_connection} that \eqref{eq:find_weight} can be relaxed to the problem of finding ensemble weights that give a maximum likelihood estimation (MLE) solution under certain conditions. As a result, instead of solving the intractable optimization in \eqref{eq:find_weight}, we find the optimal weights by
\begin{equation} \label{eq:reform:mse}
    \mathbf{w}_{0:m}^\dagger \in \argmax_{\mathbf{w}_{0:m}} \E_{XY}[ \log \bar{f}_{Y}(X; \theta_{0:m}, \mathbf{w}_{0:m})].
\end{equation}

In the following theorem which is proven in Appendix \ref{proof:3_1}, we show that the simple relative thresholding in \eqref{eq:ema_thres} can make $\bar{f}(x; \theta_{0:m}, \mathbf{w}_{0:m})$ asymptotically correct for samples where the neural network tends to be relatively confident during self-training and thus our simple weighting mechanism in \eqref{eq:ema_thres} to be the solution of \eqref{eq:reform:mse} in the asymptotic region.

\begin{theorem} \label{thm:temporal_error}
    Let $A_i(c) := \{x \in \gX | c(x; \theta_i) > c \} $, $Q(x; \mathbf{c}_{0:m}) := \sum_{i=0}^{m} \mathbf{1}(x \in A_i(c_i))$, 
    and $\bar{p}(x; \mathbf{c}_{0:m}) = \frac{1}{Q(x; \mathbf{c}_{0:m})} \sum_{i=0}^{m} \E_{Y|X=x}[\mathbf{1}(Y(x) = \hat{Y}(x; \theta_i))] \mathbf{1}(x \in A_i(c_i))$ for $x$ such that $Q(x; \mathbf{c}_{0:m})$ >0.
     Let us assume that random events $\mathbf{1}(Y(x) = \hat{Y}(x; \theta_i))$ and $\mathbf{1}(Y(x) = \hat{Y}(x; \theta_j))$ are conditionally independent given $X \in A_i(c)$ for $j \in \{0,\cdots, i-1 \}$, $x \in \gX$, $c \in [0,1)$.
    If $x \in \gX$ such that $\bar{p}(x; \mathbf{c}_{0:m}) > 1 / 2$, then for the generalized temporal ensemble in \eqref{eq:ema_thres}, it holds that
    \begin{equation} \label{eq:thm_31}
        p(\argmax_{k \in [K]} \bar{f}_k(x; \theta_{0:m}, \mathbf{w}_{0:m}) \neq Y(x)) \leq \exp\left( - \tfrac{Q(x; \mathbf{c}_{0:m})}{2} \cdot \xi(\bar{p}(x; \mathbf{c}_{0:m}))
        \right)
    \end{equation}
    where $\xi(z) := 2z - 1 - \log(2z) $ is a positive increasing function in $z \in [0.5,1]$.
\end{theorem}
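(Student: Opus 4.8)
The plan is to reduce the event ``the ensemble's plurality vote is incorrect'' to a lower-tail deviation of a sum of independent Bernoulli indicators, and then apply a Chernoff bound. First I would unpack the ensemble weights: plugging the relative thresholding \eqref{eq:ema_thres} (with the thresholds $c_i = \delta^{(\beta)}_i$) together with the hard predictions into \eqref{eq:teacher_pred} gives $\bar{f}_k(x) = Q(x;\mathbf{c}_{0:m})^{-1}\sum_{i=0}^m \mathbf{1}(x \in A_i(c_i))\,\mathbf{1}(k = \hat{Y}(x;\theta_i))$, so that $\bar{f}_k(x)$ is exactly the fraction of the $Q(x;\mathbf{c}_{0:m})$ ``active'' predictions (those exceeding their threshold) that vote for class $k$, and $\argmax_k \bar{f}_k(x)$ is the plurality winner among active votes.

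Next I would set $Z_i := \mathbf{1}(\hat{Y}(x;\theta_i) = Y(x))$ for each active index $i$ and observe that $\sum_i Z_i$ equals the number of active votes cast for the true label $Y(x)$. If this count strictly exceeds $Q(x;\mathbf{c}_{0:m})/2$, then $Y(x)$ is the unique plurality winner and the ensemble is correct; contrapositively, $\{\argmax_k \bar{f}_k(x) \neq Y(x)\} \subseteq \{\sum_i Z_i \le Q(x;\mathbf{c}_{0:m})/2\}$. By the stated conditional-independence assumption the $Z_i$ are independent Bernoulli variables with means $p_i := \E_{Y|X=x}[\mathbf{1}(Y(x)=\hat{Y}(x;\theta_i))]$, and by construction $Q(x;\mathbf{c}_{0:m})^{-1}\sum_i p_i = \bar{p}(x;\mathbf{c}_{0:m})$. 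This is exactly the regime for a lower-tail Chernoff bound, since $\bar{p} > 1/2$ places the mean $Q\bar{p}$ strictly above the threshold $Q/2$.

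The technical core is the Chernoff estimate. I would bound $p(\sum_i Z_i \le Q/2) \le e^{tQ/2}\prod_i\bigl(1 - p_i(1-e^{-t})\bigr)$ for $t > 0$, then use concavity of the logarithm (Jensen) to replace the product by $(1 - \bar{p}(1-e^{-t}))^Q$, reducing everything to the average $\bar{p}$ despite the non-identical means. Substituting $u = e^{-t}$ and minimizing $u^{-1/2}(1-\bar{p} + \bar{p}u)$ over $u \in (0,1)$ yields the optimizer $u^\star = (1-\bar{p})/\bar{p}$, which lies in $(0,1)$ precisely because $\bar{p} > 1/2$, giving $p(\sum_i Z_i \le Q/2) \le (4\bar{p}(1-\bar{p}))^{Q/2}$. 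Finally I would convert this to the stated form by writing $\log(4\bar{p}(1-\bar{p})) = \log(2\bar{p}) + \log(2(1-\bar{p}))$ and applying the elementary inequality $\log v \le v - 1$ with $v = 2(1-\bar{p})$, which gives $\log(4\bar{p}(1-\bar{p})) \le \log(2\bar{p}) + (1 - 2\bar{p}) = -\xi(\bar{p})$ and hence \eqref{eq:thm_31}. Positivity and monotonicity of $\xi$ on $[1/2,1]$ follow from $\xi(1/2)=0$ and $\xi'(z) = 2 - 1/z > 0$.

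I expect the main obstacle to be the Chernoff step, specifically handling the heterogeneous Bernoulli means: the Jensen reduction to $\bar{p}$ must be justified as an upper bound, and the tilt optimization has to be carried out exactly to land on the clean $(4\bar{p}(1-\bar{p}))^{Q/2}$ before the final relaxation $\log v \le v - 1$ produces the $\xi$-form in the statement. The reduction from ``plurality incorrect'' to ``at most half of the active votes fall on $Y(x)$'' is the only other place needing care, but it is a clean set containment once the hard-vote interpretation of $\bar{f}$ is in place.
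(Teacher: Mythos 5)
Your proposal is correct and follows essentially the same route as the paper: reduce the event that the ensemble's plurality vote is wrong to the event that at most half of the $Q(x;\mathbf{c}_{0:m})$ active votes land on $Y(x)$, then apply a Chernoff bound to the sum of conditionally independent heterogeneous Bernoullis, controlling everything through the average mean $\bar{p}$. The only difference is internal to the Chernoff step — the paper applies $\log v \le v-1$ to each MGF factor up front (yielding a Poisson-type bound) and then optimizes the tilt, whereas you optimize exactly to reach the sharper intermediate bound $(4\bar{p}(1-\bar{p}))^{Q/2}$ and relax to the $\xi$-form at the end — and both land on the identical final inequality.
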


The result states that as long as the average accuracy for \textit{relatively confident predictions} over iterations exceeds 50\%, the error rate of the generalized temporal ensemble monotonically decreases as $ Q(x; \mathbf{c}_{0:m}) $ increases. Furthermore, $ \bar{f}(x; \theta_{0:m}, \mathbf{w}_{0:m}) $ is asymptotically correct on $ x $ such that $ Q(x; \mathbf{c}_{0:m}) \rightarrow \infty $ as $ m \rightarrow \infty $. \texttt{AnCon} aims to achieve these desirable properties through the \textit{uncertainty-aware} temporal consistency that helps to satisfy the condition $ \bar{p}(x; \mathbf{c}_{0:m}) > 0.5 $. Specifically, as shown in Figure \ref{fig:supp_fig_a} in Appendix, our generalized temporal ensemble’s accuracy tends to significantly increase as the number of confident samples increases, being consistent with our theory. We note that this monotonic improvement would not be the case for the temporal ensemble without uncertainty-awareness and vanilla self-training (cf. Figure \ref{fig:supp_fig_a}).

Finally, we emphasize that the assumption $ \bar{p}(x; \mathbf{c}_{0:m}) > 0.5 $ applies only to \textit{relatively confident predictions} which are \textit{averaged over iterations}. This is significantly weaker than requiring a lower bound of an expected accuracy of \textit{each sample} for \textit{every iteration}, which is the case when LFN methods are directly applied to the self-training scenario. Also, due to its dependency on the choice of the confidence thresholds $ \mathbf{c}_{0:m} $, the assumption can hold by controlling $ \mathbf{c}_{0:m} $ at the expense of loosening the upper bound in \eqref{eq:thm_31} (e.g., selecting 90th-quantile as in \Figref{fig:supp_fig_b} in Appendix). Specifically, increasing the thresholds can improve $\xi(\bar{p}(x; \mathbf{c}_{0:m}))$ and enhance the chance of satisfying $\bar{p}(x; \mathbf{c}_{0:m}) > 1 / 2$ but reducing $Q(x; \mathbf{c}_{0:m})$. While this trade-off necessitates a proper choice of $\mathbf{c}_{0:m}$, our extensive experiments show that setting the threshold $c_m$ by the EMA of prediction confidence, i.e., $\delta^{(\beta)}_m$ in \eqref{eq:ema_thres}, works effectively.

\subsection{Theoretical insights from knowledge distillation} \label{sec:kd_connection}   \vskip -0.1in
In this section, we present a novel connection between \texttt{AnCon} and KD for addressing intractability of \eqref{eq:find_weight}. KD is a framework for training a small student network $f$, e.g., ResNet-50 \citep{he2016deep}, with an additional supervision from a large teacher network $f^{(t)}$, e.g., ResNet-152. Specifically, a cross-entropy under KD is $l_{KD}(\theta) = \E_{XY} \left[ H(f(X; \theta), (1- \lambda_{KD}) E_1( Y(X)) + \lambda_{KD} f^{(t)}(X)) \right]$ with $\lambda_{KD} \in [0,1]$, which bears a significant similarity with \texttt{AnCon} (cf. \eqref{eq:label_smoothing}). Indeed, \texttt{AnCon} can be understood as a special case of KD called  self-distillation when $f$ and $f^{(t)}$ have the same architecture, where the generalized temporal ensemble $\bar{f}$ corresponds to the teacher network $ f^{(t)} $ with a notable difference that the pseudo label $ \hat{Y} $ is used instead of the true label $ Y $. Based on this connection, we perform a convergence analysis of \texttt{AnCon} by modifying the partial variance reduction theory \citep{safaryan2024knowledge}, as given below. We note that the usage of $\tilde{Y}$ and $\bar{f}$ results in an inherently biased gradient estimator, which requires special treatments for the convergence analysis unlike the typical self-distillation setting.

\textbf{Setup}
Following \citep{safaryan2024knowledge}, we assume a linear model $f_k(x; \theta) = \exp(\Theta_k^T x) / \sum_{i \in [K]} \exp(\Theta_i^T x)$ with $\Theta_i \in \R^d $ for $i \in [K]$, $\theta := \text{Concat}(\Theta_1, \cdots, \Theta_K) \in \R^{dK}$, and $\text{Concat}(\cdot)$ is the concatenation operation. Also, we assume a bounded support for $X$; that is, $\parallel x \parallel \leq C$ for all $x$ where $p_X(x) > 0$. Under this setting, we repeat the following steps starting from a given $\theta_0$ (i.e., $m=0$): 
\setlist{nolistsep}
\begin{enumerate}[noitemsep] 
\vspace{-0.1cm}
    \item Outer temporal ensemble update: Update $\mathbf{w}_{0:m}$ by \eqref{eq:ema_thres} to obtain $\bar{f}(x; \theta_{0:m}, \mathbf{w}_{0:m})$ (cf. \eqref{eq:teacher_pred}).
    \item Inner parameter update: With $\theta_{m,0} := \theta_m$ and $(\theta_{0:m}, \mathbf{w}_{0:m})$, solve \eqref{eq:label_smoothing} with stochastic gradient descent $\theta_{m, t+1} = \theta_{m,t} - \gamma g_\xi^{(m,t)}$ for $t \in \{0,1, \cdots, T-1\}$ and set $\theta_{m+1} = \theta_{m, T}$.
    \item Iteration number update: Set $m=m+1$ and terminate if $m = \hat{T}$. 
\end{enumerate}

Here, $\gamma$ is the learning rate, $\xi$ contains $b$ random samples from $p_X$, and the stochastic gradient under \texttt{AnCon} is defined as $g_\xi^{(m,t)} :=  \nabla_{\theta_{m,t}} \frac{1}{b} \sum_{X_i \in \xi} H(f(X_i; \theta_{m,t}), \tilde{Y}(X_i; \theta_{0:m}, \mathbf{w}_{0:m})) $. For the linear model, we note that $g_\xi^{(m,t)} = \nabla \tilde{l}_\xi (\theta_{m,t}) - \lambda \hat{g}_{\xi}(\theta_{0:m}, \mathbf{w}_{0:m})$ where $\tilde{l}_\xi(\theta_{m,t}) = \frac{1}{b} \sum_{X_i \in \xi}H(f(X_i; \theta_{m,t}), \hat{Y}(X_i; \theta_m))$ and $\hat{g}_\xi (\theta_{0:m}, \mathbf{w}_{0:m}) := \text{Concat}(\hat{g}_{1, \xi} (\theta_{0:m}, \mathbf{w}_{0:m}), \cdots, \hat{g}_{K, \xi} (\theta_{0:m}, \mathbf{w}_{0:m})) $ with $\hat{g}_{k, \xi} (\theta_{0:m}, \mathbf{w}_{0:m}) := \frac{1}{b} \sum_{X_i \in \xi} [ (\bar{f}_k(X_i; \theta_{0:m}, \mathbf{w}_{0:m}) - \hat{Y}_k(X_i; \theta_m)) X_i]$ for $k \in [K]$.

In Theorem \ref{thm:optimality} which is proven in Appendix \ref{proof_optimality}, we analyze the convergence of the inner parameter update step under \texttt{AnCon} and vanilla self-training.

\begin{theorem} \label{thm:optimality}
    Let us assume $l(\theta)$ satisfies $L$-smoothness, $\gL$-expected smoothness, and $\mu$-Polyak-Lojasiewicz (PL) condition (cf. Assumptions \ref{assumption:L_smooth}, \ref{assumption:exp_smooth}, and \ref{assumption:pl_cond} in Appendix \ref{appx:assumptions}). 
    For $\gamma \leq \tfrac{\mu}{4 \gL \cdot L}$, a carefully chosen $\lambda$ (cf. $\lambda = \lambda_m^\dagger := \frac{\E_\xi[\langle \nabla \tilde{l}_\xi (\theta^*), \hat{g}_\xi (\theta_{0:m}, \mathbf{w}_{0:m}) \rangle] }{ \E_\xi \parallel \hat{g}_\xi (\theta_{0:m}, \mathbf{w}_{0:m}) \parallel^2 + \frac{2}{L \gamma}\parallel \hat{g}(\theta_{0:m}, \mathbf{w}_{0:m}) \parallel^2}$ in Lemma \ref{lemma:reduction_bound} in Appendix \ref{appx:supp_lemmas}), and any realization of $\theta_m$, it holds that
    \begin{equation} \label{eq:init_nbhd}
        \underbrace{\E [l(\theta_{m,t}) - l(\theta^*) | \theta_m]}_{\text{Optimality gap}} \leq 
            \underbrace{(1 - \gamma \mu)^t (l(\theta_m) - l(\theta^*))}_{\text{Improvement over the initial model}} 
            + \underbrace{\tfrac{8C^2}{\mu} g^{\gE}(\theta_m)}_{\text{Bias of the pseudo label}}  
            +  \underbrace{\tfrac{2}{  \mu} N(\lambda_m^\dagger; \theta_{0:m}, \mathbf{w}_{0:m})}_{\text{Neighborhood size}}
    \end{equation}
    where $l^* := l(\theta^*)$ with $\theta^* \in \argmin_{\theta \in \Theta} l(\theta) $, $g^{\gE}(\theta) := \E[\mathbf{1}(\hat{Y}(X; \theta) \neq Y(X))]$, $N(\lambda; \theta_{0:m}, \mathbf{w}_{0:m}) = \lambda^2 \parallel \hat{g}(\theta_{0:m}, \mathbf{w}_{0:m}) \parallel^2  + \tfrac{L \gamma}{2} \E_\xi \left[ \parallel \nabla \tilde{l}_\xi(\theta^*) - \lambda \hat{g}_\xi(\theta_{0:m}, \mathbf{w}_{0:m}) \parallel^2 \right]$, and $N(\lambda_m^\dagger; \theta_{0:m}, \mathbf{w}_{0:m}) \leq N(0)$ where $N(0):= N(0; \theta_{0:m}, \mathbf{w}_{0:m})$ for any $(\theta_{0:m}, \mathbf{w}_{0:m})$. 
    
    Further, when $\mathbf{w}_{0:m}$ is such that $\E_\xi[\langle \nabla \tilde{l}_\xi (\theta^*), \hat{g}_\xi (\theta_{0:m}, \mathbf{w}_{0:m}) \rangle] \geq 0$, i.e., the teacher has a sufficiently good performance, it holds that 
    \begin{equation} \label{eq:var_red_thm_bound}
        \tfrac{N(\lambda_m^\dagger; \theta_{0:m}, \mathbf{w}_{0:m})}{N(0)} 
            \leq \min\left( 1, 2 \min\left(1, \tfrac{C^2 g^{KL}(\theta_{0:m}, \mathbf{w}_{0:m})}{\sigma_* \sigma_{(\theta_{0:m}, \mathbf{w}_{0:m})}} \right) + \tfrac{2 C^2 g^{C}(\theta_{0:m}, \mathbf{w}_{0:m})}{L \gamma \sigma_{(\theta_{0:m}, \mathbf{w}_{0:m})}^2}\right)
    \end{equation}
    where $\sigma_*^2 := \E_\xi \parallel \nabla \tilde{l}_\xi (\theta^*) \parallel^2$, $\sigma_{(\theta_{0:m}, \mathbf{w}_{0:m})}^2 := \E_\xi \parallel \hat{g}_\xi(\theta_{0:m}, \mathbf{w}_{0:m})  \parallel^2$, $g^{KL}(\theta_{0:m}, \mathbf{w}_{0:m}) = \E_X [D_{KL}(f(X; \theta^*)  \parallel \bar{f}(X; \theta_{0:m}, \mathbf{w}_{0:m}))]$ with $D_{KL}(p \parallel q)$ is the KL-divergence between $p$ and $q$, and $g^{C}(\theta_{0:m}, \mathbf{w}_{0:m}) = \parallel \E_X[\bar{f}(X; \theta_{0:m}, \mathbf{w}_{0:m})] - \E_X[\hat{Y}(X; \theta_m)] \parallel^2$.
\end{theorem}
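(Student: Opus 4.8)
The plan is to read the inner parameter update as a biased–gradient SGD analysis under the Polyak–Łojasiewicz condition, tracking the \emph{true} optimality gap $l(\theta_{m,t})-l^*$ while the iterates actually descend on the frozen smoothed objective. The starting point is the exact gradient identity for the linear softmax model noted in the setup: since $\theta_{0:m}$ and $\mathbf{w}_{0:m}$ are fixed during the inner loop, $\hat g_\xi$ is a constant control variate, so $g_\xi^{(m,t)}$ is an \emph{unbiased} estimator of $\nabla\tilde l(\theta_{m,t})$ but a \emph{biased} estimator of $\nabla l$, with
\[
\E_\xi[g_\xi^{(m,t)}] = \nabla l(\theta_{m,t}) + \bigl(\nabla\hat l(\theta_{m,t}) - \nabla l(\theta_{m,t})\bigr) - \lambda\,\hat g(\theta_{0:m},\mathbf{w}_{0:m}),
\]
where $\hat l$ is the expected pseudo-label self-training loss. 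The pseudo-label bias piece will generate the $\tfrac{8C^2}{\mu}g^{\gE}$ term and the teacher correction will fold into $N(\lambda)$.

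For \eqref{eq:init_nbhd} I would apply the $L$-smoothness descent lemma (Assumption \ref{assumption:L_smooth}) along $\theta_{m,t+1}=\theta_{m,t}-\gamma g_\xi$ and take $\E_\xi$. The cross term $\langle\nabla l,\E_\xi[g_\xi]\rangle$ is lower-bounded by $\tfrac12\|\nabla l\|^2$ minus a bias-squared term via Young's inequality, and I would split the bias-squared into $\|\nabla\hat l-\nabla l\|^2 + \lambda^2\|\hat g\|^2$. The first piece is controlled by $\nabla_{\Theta_k}(\hat l-l)=\E_X[(p(Y{=}k|X)-\hat Y_k)X]$, Jensen, and $\|X\|\le C$, giving a bound of order $C^2\,\E_X\|p_{Y|X}-\hat Y\|^2 \lesssim C^2 g^{\gE}(\theta_m)$ (the one-hot gap contributes $\mathbf 1(\hat Y\neq Y)$ in expectation). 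The quadratic term $\E_\xi\|g_\xi\|^2$ is handled by expected smoothness (Assumption \ref{assumption:exp_smooth}) together with the variance-at-optimum split $\E_\xi\|g_\xi\|^2 \le 4\gL\bigl(l-l^*\bigr) + 2\,\E_\xi\|g_\xi(\theta^*)\|^2$; the $\gL$-term is absorbed into the contraction by $\gamma\le\mu/(4\gL L)$, and the residual $\tfrac{L\gamma}{2}\E_\xi\|g_\xi(\theta^*)\|^2$ is exactly the second summand of $N(\lambda)$. Invoking PL (Assumption \ref{assumption:pl_cond}), $\|\nabla l\|^2\ge 2\mu(l-l^*)$, yields a one-step recursion $\E_\xi[l(\theta_{m,t+1})-l^*]\le(1-\gamma\mu)(l(\theta_{m,t})-l^*)+\gamma\bigl(2C^2 g^{\gE}+\lambda^2\|\hat g\|^2\bigr)+L\gamma^2\E_\xi\|g_\xi(\theta^*)\|^2$; summing $\sum_s(1-\gamma\mu)^s\le(\gamma\mu)^{-1}$ reproduces \eqref{eq:init_nbhd} (the stated constants are a loose upper bound on this bookkeeping). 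The inequality $N(\lambda_m^\dagger)\le N(0)$ is immediate, since $N(\cdot)$ is a convex quadratic minimized at $\lambda_m^\dagger$, which I would cite from Lemma \ref{lemma:reduction_bound} rather than reprove.

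For \eqref{eq:var_red_thm_bound}, the outer $\min(1,\cdot)$ is again free from minimality. For the nontrivial argument I would start from the closed-form quadratic minimum, writing $\rho:=\E_\xi[\langle\nabla\tilde l_\xi(\theta^*),\hat g_\xi\rangle]$ and $\sigma_\cdot^2:=\sigma_{(\theta_{0:m},\mathbf{w}_{0:m})}^2$,
\[
\frac{N(\lambda_m^\dagger)}{N(0)} = 1 - \frac{L\gamma\,\rho^2}{2\sigma_*^2\bigl(\|\hat g\|^2 + \tfrac{L\gamma}{2}\sigma_\cdot^2\bigr)},
\]
and upper-bound the ratio by lower-bounding $\rho$; equivalently, since $\lambda_m^\dagger$ minimizes $N$, I may bound $N(\lambda_m^\dagger)\le N(\lambda_0)$ at a judiciously chosen $\lambda_0$. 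Both routes require translating the gradient-space quantities into the probability-space teacher-quality measures using $\nabla l(\theta^*)=0$ (so $\E_X[f(X;\theta^*)\otimes X]=\E_X[p_{Y|X}\otimes X]$), the per-sample identity $g_\xi(\theta^*)\leftrightarrow(f(X;\theta^*)-\tilde Y)\otimes X$, Pinsker's inequality $\E_X\|f(X;\theta^*)-\bar f\|^2\le 2g^{KL}(\theta_{0:m},\mathbf{w}_{0:m})$, and Cauchy–Schwarz (which produces the $\sigma_*\sigma_\cdot$ normalization). The minibatch second moment $\E_\xi\|g_\xi(\theta^*)\|^2$ then separates into a diagonal per-sample-variance part controlled by the KL term, giving $2\min(1, C^2 g^{KL}/(\sigma_*\sigma_\cdot))$, and an off-diagonal mean-squared part controlled by the mean gap $g^{C}(\theta_{0:m},\mathbf{w}_{0:m})$, giving $2C^2 g^{C}/(L\gamma\sigma_\cdot^2)$.

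The hard part will be this dictionary between gradient-space inner products and second moments ($\rho$ and $\E_\xi\|g_\xi(\theta^*)\|^2$) and the interpretable teacher-quality quantities $g^{KL}$ and $g^{C}$. It demands careful minibatch covariance bookkeeping to isolate the diagonal ($i=j$) i.i.d.-variance contribution from the off-diagonal ($i\neq j$) mean-squared contribution, repeated use of $\|X\|\le C$ to strip the $\otimes X$ factors, and Pinsker to pass from $\ell_2$ probability gaps to KL—all while keeping the optimizer identity $\nabla l(\theta^*)=0$ active to replace $f(\theta^*)$ by the true conditional. A secondary difficulty is arranging the Young's-inequality split in the first part so that the pseudo-label bias cleanly yields $\tfrac{8C^2}{\mu}g^{\gE}$ while the teacher correction folds exactly into $N(\lambda)$; because $N(\lambda_m^\dagger)\le N(0)$ is reused as a black box from Lemma \ref{lemma:reduction_bound}, the convergence half is otherwise standard biased-SGD-under-PL bookkeeping.
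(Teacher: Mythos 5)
Your proposal is correct and follows essentially the same route as the paper: the first bound is the biased-SGD-under-PL recursion of Lemma \ref{lemma:loss_bound} (descent lemma, Peter--Paul split of the bias into the $\tfrac{8C^2}{\mu}g^{\gE}$ pseudo-label term and the $N(\lambda)$ neighborhood, expected smoothness absorbed via $\gamma\le\mu/(4\gL L)$, then unrolling), the inequality $N(\lambda_m^\dagger)\le N(0)$ is exactly the convex-quadratic minimization of Lemma \ref{lemma:reduction_bound}, and the second bound is obtained as you describe by lower-bounding the normalized correlation via the polarization identity, applying Pinsker's inequality (which works because the pseudo-label cancels in $\nabla\tilde l_\xi(\theta^*)-\hat g_\xi(\theta_{0:m},\mathbf{w}_{0:m})$, leaving $f(\cdot;\theta^*)-\bar f$), and finishing with $1-\tfrac{(1-u)^2}{1+v}\le 2u+v$. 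The only cosmetic differences are that in the paper the $g^{C}$ term enters by bounding $\parallel\hat g(\theta_{0:m},\mathbf{w}_{0:m})\parallel^2=\parallel\E_\xi[\hat g_\xi]\parallel^2\le C^2 g^{C}$ inside the denominator correction $v=\tfrac{2}{L\gamma}\parallel\hat g\parallel^2/\sigma_{(\theta_{0:m},\mathbf{w}_{0:m})}^2$ rather than through a minibatch covariance decomposition of $\E_\xi\parallel g_\xi(\theta^*)\parallel^2$, and no appeal to $\nabla l(\theta^*)=0$ is needed anywhere.
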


In Theorem \ref{thm:optimality}, \eqref{eq:init_nbhd} characterizes the optimality gap in the inner loop optimization. Specifically, the first term is about reducing the initial optimality gap over iterations and motivates why we need \textit{adaptation}, e.g., by self-training, if the performance deteriorates under severe distribution shifts. The second term is about the bias of the pseudo label and motivates the challenges of self-training under poorly performing pseudo labels in the case of severe distribution shifts.
Crucially, these two terms can be fully characterized by the quality of initial model $\theta_m$ and do not depend on $\mathbf{w}_{0:m}$. Thus, we concentrate on the impacts of $\mathbf{w}_{0:m}$ designed in \eqref{eq:label_smoothing}-\eqref{eq:ema_thres} on $N(\lambda_m^\dagger; \theta_{0:m}, \mathbf{w}_{0:m})$ to show that \texttt{AnCon}'s effectiveness on improving self-training performance under distribution shifts.

First, Theorem \ref{thm:optimality} shows the effectiveness of our label smoothing formulation in \eqref{eq:label_smoothing} under properly chosen $\lambda_m^\dagger$.
Specifically, compared to vanilla self-training, \texttt{AnCon} results in the smaller neighborhood size of the stochastic gradient descent; $N(\lambda_m^\dagger; \theta_{0:m}, \mathbf{w}_{0:m}) \leq N(0)$. That is, the result suggests that \texttt{AnCon} is at least better than vanilla self-training under the mild regularity conditions.

Further, under the additional assumption of a sufficiently good performance temporal ensemble, \eqref{eq:var_red_thm_bound} motivates \texttt{AnCon}'s weighting mechanism as a relaxed solution of the intractable optimization problem in \eqref{eq:find_weight}. Specifically, if the marginal distribution of the pseudo labels does not change quickly over outer iterations (which is the case especially for the later training stages as shown in Figure \ref{result:marginal_figs} in Appendix), changing $\mathbf{w}_{0:m}$ would have only a marginal impact on $g^{C}(\theta_{0:m}, \mathbf{w}_{0:m})$. Thus, the weighting mechanism that minimizes $g^{KL}(\theta_{0:m}, \mathbf{w}_{0:m})$ would minimize the worst-case optimality gap, which justifies our approach of circumventing intractability of \eqref{eq:find_weight} with \eqref{eq:reform:mse}. In this regard, \texttt{AnCon}'s weighting mechanism in \eqref{eq:ema_thres} could be thought of as a relaxed solution of \eqref{eq:find_weight} as it minimizes $g^{KL}(\theta_{0:m}, \mathbf{w}_{0:m})$ in the asymptotic region (cf. Theorem \ref{thm:temporal_error}).

We conclude this section by analyzing the three iterative steps where the pseudo labels and the temporal ensembles keep updating.
\begin{corollary} \label{thm:corollary}
    Let us assume $l(\theta)$ satisfies $L$-smoothness, $\gL$-expected smoothness, and $\mu$-PL condition. 
    For $\gamma \leq \tfrac{\mu}{4 \gL \cdot L}$, 
    $\lambda$ carefully adjusted at each outer temporal ensemble update (cf. $\lambda = \lambda_j^\dagger$ in Lemma \ref{lemma:reduction_bound} for each outer iteration $j \in \{0, \cdots, \hat{T} - 1\}$), and any initial parameter $\theta_0$, it holds that 
    \begin{equation} \label{eq:outer_bound}
        \E [l(\theta_{\hat{T}}) - l^* ]  
        \leq (1 - \mu \gamma)^{T\cdot (\hat{T} - 1)}(l(\theta_{0}) - l^*) 
        + \zeta_{\hat{T}}
            \E_{j \sim I^{(\hat{T})}}[\tfrac{8C^2}{\mu}  g^{\gE}(\theta_j) 
            + \tfrac{2}{\mu} N(\lambda_j^\dagger; \theta_{0:j}, \mathbf{w}_{0:j})] 
    \end{equation}
    where $p(I^{(\hat{T})} = j) \propto (1 - \mu\gamma)^{T \cdot (\hat{T} - 1 - j)}$ for $j \in \{0, \cdots, \hat{T} - 1\}$ and $\zeta_{\hat{T}} = \sum_{i=0}^{\hat{T}-1} (1 - \mu \gamma)^{T\cdot i}$.
\end{corollary}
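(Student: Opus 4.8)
The plan is to obtain the Corollary as a telescoping consequence of Theorem~\ref{thm:optimality}, treating each pass of the inner parameter update as a single contraction step of an outer recursion. First I would specialize Theorem~\ref{thm:optimality} to $t = T$, so that $\theta_{m,T} = \theta_{m+1}$ by construction of the iterative scheme. Writing $\rho := (1-\mu\gamma)^T$ for the per-outer-step contraction factor and abbreviating the per-iteration error term as $B_m := \tfrac{8C^2}{\mu} g^{\gE}(\theta_m) + \tfrac{2}{\mu} N(\lambda_m^\dagger; \theta_{0:m}, \mathbf{w}_{0:m})$, this yields the one-step conditional bound
\[
    \E[l(\theta_{m+1}) - l^* \mid \theta_m] \leq \rho\,(l(\theta_m) - l^*) + B_m .
\]
The point that makes the chaining legitimate is that both the admissible learning-rate range $\gamma \leq \tfrac{\mu}{4\gL L}$ and the resulting factor $\rho$ are \emph{uniform in} $m$: although $\lambda_m^\dagger$ and $\mathbf{w}_{0:m}$ are re-selected at each outer iteration, the same inequality holds at every $m$.

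Next I would pass to full expectations and unroll. Since Theorem~\ref{thm:optimality} holds for any realization of $\theta_m$, the displayed inequality is genuinely a conditional statement given the trajectory $\theta_{0:m}$, and $B_m$ is measurable with respect to that history; applying the tower property yields the scalar recursion $a_{m+1} \leq \rho\, a_m + \E[B_m]$ for $a_m := \E[l(\theta_m) - l^*]$, with $a_0 = l(\theta_0) - l^*$ deterministic and nonnegative. Unrolling over the $\hat{T}$ outer iterations gives
\[
    a_{\hat{T}} \leq \rho^{\hat{T}} (l(\theta_0) - l^*) + \sum_{j=0}^{\hat{T}-1} \rho^{\,\hat{T}-1-j}\, \E[B_j],
\]
and since $\rho \in [0,1]$ and $a_0 \geq 0$ the leading term is at most $\rho^{\hat{T}-1}(l(\theta_0)-l^*) = (1-\mu\gamma)^{T(\hat{T}-1)}(l(\theta_0)-l^*)$, matching the first term of \eqref{eq:outer_bound}.

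Finally I would repackage the geometric sum as an expectation over the sampled index. Reindexing $i = \hat{T}-1-j$ shows $\sum_{j=0}^{\hat{T}-1}\rho^{\,\hat{T}-1-j} = \sum_{i=0}^{\hat{T}-1}(1-\mu\gamma)^{Ti} = \zeta_{\hat{T}}$, which is exactly the normalizing constant of the distribution $p(I^{(\hat{T})} = j) \propto (1-\mu\gamma)^{T(\hat{T}-1-j)}$. Hence $\sum_{j=0}^{\hat{T}-1}\rho^{\,\hat{T}-1-j}\E[B_j] = \zeta_{\hat{T}}\,\E_{j \sim I^{(\hat{T})}}\!\left[\E[B_j]\right]$, and substituting the definition of $B_j$ recovers the second term of \eqref{eq:outer_bound}.

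Because the substantive analysis already lives in Theorem~\ref{thm:optimality}, the Corollary is essentially a recursion-unrolling argument, so I expect the main care --- rather than genuine difficulty --- to be in the conditioning bookkeeping. The step I would scrutinize is the claim that the per-iteration bound can be invoked conditionally at every outer step despite $\lambda_m^\dagger$ and $\mathbf{w}_{0:m}$ depending on the random path $\theta_{0:m}$; the resolution is precisely that Theorem~\ref{thm:optimality} is stated for an arbitrary realization of $\theta_m$ together with its correspondingly chosen coefficient, so the one-step inequality is valid path-by-path and the tower property applies cleanly. Once this uniformity is confirmed, identifying $\zeta_{\hat{T}}$ as the normalizer of $I^{(\hat{T})}$ is pure bookkeeping.
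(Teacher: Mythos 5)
Your proposal is correct and follows essentially the same route as the paper: invoke Theorem~\ref{thm:optimality} with $t=T$ and $\lambda=\lambda_m^\dagger$ to get the one-step conditional contraction, unroll the recursion via the tower property, and identify the geometric weights with the normalizer $\zeta_{\hat{T}}$ of $I^{(\hat{T})}$. Your explicit handling of the $\rho^{\hat{T}}\le\rho^{\hat{T}-1}$ slack in the leading term and of the conditioning on the path-dependent $\lambda_m^\dagger$, $\mathbf{w}_{0:m}$ is slightly more careful than the paper's terse "recursively applying" step, but the argument is the same.
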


Corollary \ref{thm:corollary} is proved in Appendix \ref{appx:proof_corollary} and gives a whole picture of the optimality gap under \texttt{AnCon}. We first remark the trade-off associated with $\hat{T}$ on the suboptimality $\E [l(\theta_{\hat{T}}) - l^*]$, which characterize the early-learning phenomenon observed in the biased gradient settings (e.g., self-training \citep{rusak2022if} and LFN \citep{li2020gradient}). Specifically, in \eqref{eq:outer_bound}, increasing $\hat{T}$ reduces the first term $(1 - \mu \gamma)^{T\cdot (\hat{T} - 1)}(l(\theta_{0}) - l^*)$ but increases the coefficient of the second term $\zeta_{\hat{T}}$. Therefore, a longer training with a large $\hat{T}$ may not enhance the self-training performance especially under a large second term due to inaccurate pseudo labels or the temporal ensemble.

Nevertheless, for each outer loop iteration, $g^\gE(\theta_j)$ would be smaller under \texttt{AnCon} than its value under vanilla self-training as $\E[l(\theta_j) - l^*]$ has a tighter upper bound under \texttt{AnCon} due to Theorem \ref{thm:optimality}. Therefore, with the guarantee $N(\lambda_j^\dagger; \theta_{0:j}, \mathbf{w}_{0:j}) \leq N(0)$, \texttt{AnCon} would achieve a tighter upper bound of \eqref{eq:outer_bound} than the vanilla self-training method, enabling longer training with smaller value of the second term as observed in Figure \ref{fig:imgc_elp}. Finally, we remark that this theoretical superiority of \texttt{AnCon} can be extended to the self-training methods with other weighting mechanisms in the asymptotic region when $\E_\xi[\langle \nabla \tilde{l}_\xi (\theta^*), \hat{g}_\xi (\theta_{0:m}, \mathbf{w}_{0:m}) \rangle] \geq 0$ (cf. Theorem \ref{thm:optimality}).

\section{Experiments} \label{sec:4experiments}  
\textbf{Goal and baselines}
Part of the experiments shows that \texttt{AnCon} surpasses the vanilla self-training method that uses $\hat{Y}(x;\theta)$ as a pseudo label, which serves as a strong baseline, under different types of distribution shifts (Sections \ref{sec:exp_domain_shift}-\ref{sec:exp_corruption}). We also show that \texttt{AnCon} achieves a better performance than ELR with $\lambda_{ELR}^* \in \{1, 3, 7, 12, 25\}$, which is the coefficient multiplied to the auxiliary loss, to show effectiveness of uncertainty aware temporal ensemble. Finally, we assess the integration of \texttt{AnCon} with generalized cross-entropy (GCE) \citep{ghosh2017robust, rusak2022if} and neighborhood reciprocity clustering (NRC) \citep{yang2021exploiting}, which are frequently cited as state-of-the-art methods \citep{karim2023c, press2024rdumb, rusak2022if}. We note that \texttt{AnCon} can be seamlessly applied to GCE and NRC by replacing one-hot pseudo labels by \texttt{AnCon}'s regularized pseudo labels $\tilde{Y}(X; \theta_{0:m}, \mathbf{w}_{0:m})$. For descriptions of the training configurations which we adopt from literature, see Appendix \ref{appx:additional_details}.

\textbf{Evaluation}
In our self-training settings (SFDA and TTA), we should determine the best checkpoint without labeled samples, i.e. to select a model $\theta_0, \theta_1, \cdots, \theta_I$ which is used for evaluation on test. We use information maximization \citep{shi2012infomax}, $IM(\theta) = H_E(\E_X[f(X; \theta)]) - \E_X[H_E(f(X; \theta))]$ when $H_E$ is the entropy, which is proven to be effective in unsupervised domain adaptation (UDA) model selection literature \citep{tu2022assessing, hu2024mixed}. Specifically, we evaluate $IM(\theta_{m})$ on the hold-out unlabeled samples at the end of each epoch $m$ for $I$ number of training epochs and select the checkpoint with $\theta^* \in \argmax_{m \in [I]} IM(\theta_{m})$ as the best one. This $\theta^*$ is used for final evaluation on the test data. 

\textbf{Implementation details of \texttt{AnCon}} 
For each (outer) iteration $m$, we set $w_j(x) = \mathbf{1}(c(x; \theta_j) > \delta^{(\beta)}_j)$ for $j \in \{0,\cdots, m\}$, instead of $w_j(x) = \tfrac{\mathbf{1}(c(x; \theta_j) > \delta^{(\beta)}_j)}{\sum_{i=0}^{m} \mathbf{1}(c(x; \theta_i) > \delta^{(\beta)}_i)}$. Note that the former with $\lambda$ can be thought of as the latter with instance-dependent coefficient $\lambda \cdot Q(x; \mathbf{c}_{0:m})$, which assigns more weight on the generalized temporal ensemble for $x$ with large $Q(x; \mathbf{c}_{0:m})$. 
In addition, we consider the online update scenario of pseudo labels, i.e., $T=1$.
Finally, given the challenging nature of hyperparameter selection in self-training under distribution shifts, we perform all experiments by using a single configuration of hyperparameters of \texttt{AnCon} ($\lambda = 0.3$ and $\beta = 0.9$). These hyperparameters result in the maximum value of $\max_{m \in [\hat{T}] }IM(\theta_{m})$ on the hold-out unlabeled samples on Office-31.

\begin{table*}[]
\vskip -0.2in
\caption{SFDA benchmark results. The numbers indicate the mean test accuracy across three repetitions. We present half of the domain pairs for Office-31 and OfficeHome in the main body and the rest of the pairs are presented in Appendix (cf. Tables \ref{table:sfda:o31}, \ref{table:sfda:oh}, and \ref{table:sfda:visda}). 
}
\label{table:sfda:benchmark_all}
\scalebox{0.7}{
\begin{tabular}{l|rrrr||rrrrrrr||r}
\toprule
& & \multicolumn{2}{c}{\textbf{Office-31}} & &  &\multicolumn{5}{c}{\textbf{OfficeHome}} & & \textbf{VisDa} \\
\hline
Method  & A2D & A2W & D2A & Avg (all) & Ar2Rw & Ar2Pr & Ar2Cl & Rw2Ar & Rw2Pr & Rw2Cl & Avg (all) & VisDa  \\ \hline
Self-Training &   79.32 & 80.88 & 62.07  &  80.47 &   75.19 & 69.20 & 43.07 & 66.13 & 78.10 & 46.64 &   61.55 &   67.77   \\
+ ELR ($\lambda=\lambda^*_{ELR}$) &   79.02 & \textbf{81.19} & 63.70 &    80.84 &   75.97 & 70.47 & 45.80 & 67.00 & 78.64 & 50.01 &   63.03 &   \textbf{71.89}   \\
\rowcolor{Gray} + \texttt{AnCon} &   \textbf{82.23} & 79.94 & \textbf{63.99} &    \textbf{81.37} &   \textbf{76.13} & \textbf{70.56} & \textbf{48.06} & \textbf{67.57} & \textbf{79.27} & \textbf{51.89} &   \textbf{63.91} &   71.11   \\
\hline 
GCE &   \textbf{86.85} & 86.16& 64.63 & 83.16 &  74.65  &  69.69  &  43.46  &  68.87  &  79.25  &  49.18 &   63.00 &   65.20   \\
+ ELR ($\lambda=\lambda^*_{ELR}$) &   \textbf{86.85} & 86.54 & 64.80 &  83.20 &   74.89  &  71.62  &  43.45  &  69.63  &  79.98  &  49.68 &   63.77 &   66.90  \\
\rowcolor{Gray} + \texttt{AnCon} &   86.75  &  \textbf{87.11} & \textbf{65.78} & \textbf{83.50} &  \textbf{76.47}  &  \textbf{71.80}  &  \textbf{44.83}  &  \textbf{70.21} & \textbf{80.15}  &  \textbf{51.74}   &   \textbf{64.81} &   \textbf{68.40}  \\
\hline 
NRC &   92.67  &88.11  &72.83  & 87.42 &   73.86 & 75.04 & 47.90 & 61.15 & 76.59 & 51.75 &   63.92 &   74.30   \\
+ ELR ($\lambda=\lambda^*_{ELR}$) &  92.77  & 87.92  & 72.77  & 87.46  &   76.82 & 76.01 & 52.19 & 66.01 & \textbf{80.40} & 55.69  &   66.89   &   82.80   \\
\rowcolor{Gray} + \texttt{AnCon} &   \textbf{94.28}  & \textbf{91.45}  & \textbf{74.49} & \textbf{89.07} &   \textbf{79.64} & \textbf{77.11} & \textbf{53.56} & \textbf{69.47} & 80.04 & \textbf{57.30} &  \textbf{67.96} &   \textbf{83.70}   \\
\bottomrule
\end{tabular}}
\vskip -0.15in
\end{table*}

\subsection{Self-training under domain shifts } \label{sec:exp_domain_shift}  \vskip -0.1in
We first consider SFDA that adapts a model trained in one domain by performing self-training in the distribution shifted domain.
For the network architecture, we use the modified ResNet used in \cite{liang2020we}, which includes batch normalization \citep{ioffe2015batch} after the bottleneck layer and weight normalization \citep{salimans2016weight} in the last linear layer, which is used in \citep{yang2021exploiting, yi2023source} for stabilizing the learning process in SFDA.

\textbf{Datasets } 
We evaluate \texttt{AnCon} on the following datasets: \textbf{\underline{Office-31}} \citep{saenko2010adapting} with 4,000 images of 31 categories from three domains (amazon, dslr, webcam);  \textbf{\underline{OfficeHome}} \citep{venkateswara2017deep} with 15,000 images of 65 categories from four domains (art, clipart, product, real-world); \textbf{\underline{VisDa}} \citep{peng2017visda} with 280,000 images of 31 categories from  two domains (synthetic, real). 

\textbf{Results }
\texttt{AnCon} consistently improves  self-training in diverse domain pairs across the three datasets (Table \ref{table:sfda:benchmark_all}). Specifically, it reduces the average self-training test error by 5\% in Office-31, 6\% in OfficeHome, and 13\% in VisDa. In addition, compared to ELR with its dataset-dependent optimal hyperparameter value, \texttt{AnCon} shows comparable performance to ELR (2\%, 3\%, and -3\% performance differences in Office-31, OfficeHome, and VisDa, respectively). Despite its slightly inferior performance in VisDa, we note that \texttt{AnCon} achieves significantly better accuracy for low-performing classes: for ``Skateboard'' and ``Truck'' classes, \texttt{AnCon} achieves 50\% and 16\% of accuracies, while ELR achieves 39\% and 0.43\% in these classes. Thus, \texttt{AnCon} can be as effective as ELR for improving the self-training performances under distribution shifts without needing to adjust hyperparameter values for each dataset unlike ELR.

Notably, \texttt{AnCon} significantly improves performances of both GCE and NRC via fundamentally different mechanisms for handling noisy pseudo labels (e.g., reducing test accuracy of 3\% and 8\% on average in OfficeHome, respectively). Specifically, NRC filters incorrect predictions based on local consistency, while \texttt{AnCon} uses temporal consistency. Combining NRC and \texttt{AnCon} leverages pseudo labels that are both locally and temporally consistent, resulting in significant performance improvements over NRC or Self-Training + \texttt{AnCon} (cf. Table \ref{table:sfda:benchmark_all}). In addition, GCE reduces the impact of wrong pseudo labels rather than finding them. Applying GCE to \texttt{AnCon} minimizes the effects of potentially wrong but temporally consistent pseudo labels, which can be implied by the performance of GCE + \texttt{AnCon} compared to GCE or Self-Training + \texttt{AnCon} (cf. Table \ref{table:sfda:benchmark_all}). Thus, the impressive performance gains from \texttt{AnCon}, which would be orthogonal to the gains from state-of-the-art methods in SFDA, show its significant practical implications.

\begin{figure}
    \vskip -0.2in
     \centering
     \begin{subfigure}[b]{0.39\textwidth}
         \centering
         \includegraphics[width=\textwidth]{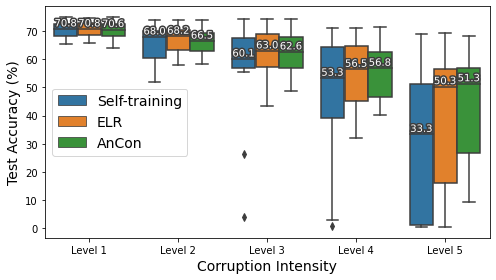}
         \caption{}
         \label{fig:imgc_bench}
     \end{subfigure}
     \begin{subfigure}[b]{0.29\textwidth}
         \centering
         \includegraphics[width=\textwidth]{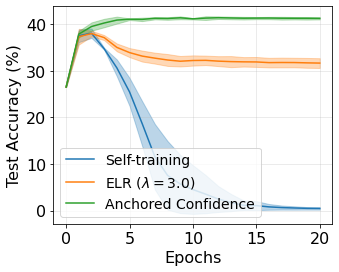}
         \caption{}
         \label{fig:imgc_elp}
     \end{subfigure}
     \begin{subfigure}[b]{0.29\textwidth}
         \centering
         \includegraphics[width=\textwidth]{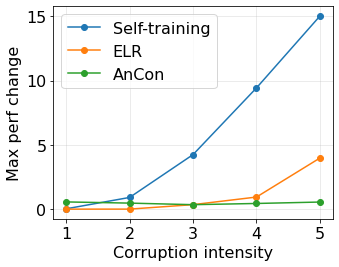}
         \caption{}
         \label{fig:robust:selec}
     \end{subfigure}
    \caption{
    \textbf{Section \ref{sec:exp_corruption}:} (a) Test accuracy for each intensity level in ImageNet-C. (b) Performance degeneration in the defocus blur corruption with intensity 4.
    \textbf{Section \ref{subsec:robust_model_selec}:} (c) Maximum performance changes under different model selection methods. We present performances for individual corruptions in Appendix. For all boxplots used in the paper, the box represents interquantile range with whiskers as $\pm$ 1.5 interquantile range and the horizontal line inside the box represents the median.}
    \label{result:imagenet_c}
    \vskip -0.15in
\end{figure}

\subsection{Self-training under synthetic corruption operations} \label{sec:exp_corruption}  \vskip -0.1in
While we have considered the domain shift, e.g., adaptation of a model trained on synthetic images to real images, in Section \ref{sec:exp_domain_shift}, this section examines the self-training's ability to adapt to distribution shifts by synthetic image corruptions. This setting has been used to measure the robustness of neural networks with respect to a general out-of-distribution setting. To this end, we consider \textbf{\underline{ImageNet-C}} \citep{hendrycks2019benchmarking}, which consists of 50,000 images drawn from a validation set of ImageNet \citep{deng2009imagenet} where each image is corrupted by 15 types of synthetic corruptions related to noise, blur, weather and digital.

\textbf{Result }
Consistent with the findings under the domain shift, \texttt{AnCon} outperforms the average performances of self-training and ELR under varying levels of corruption intensities (cf. Figure \ref{fig:imgc_bench} and Tables \ref{tab:intens1}-\ref{tab:intens5} in Appendix), improving the self-training method's accuracy by 16\% on average. Further, the gains from \texttt{AnCon} is significant when the distribution shifts are intense (e.g., improving accuracies by 20\% and 52\% on average in intensities of 4 and 5) where the initial model trained on the source domain significantly deteriorates. Specifically, for Shot, Impulse, and Gaussian corruptions with the most extreme shift intensity of 5, where the initial model achieves accuracies of (3.04\%, 1.76\%, 2.12\%), \texttt{AnCon} achieves (22.56\%, 26.56\%, 25.85\%) (cf. Table 11). This striking improvement compared to vanilla self-training with performances (0.26\%, 1.72\%, 1.04\%) and ELR with performances (8.00\%, 14.12\%, 16.00\%), underscores the importance of the AnCon's uncertainty-aware temporal consistency scheme, as shown in Corollary \ref{thm:corollary}. We note that this impressive result is also explained by \texttt{AnCon}'s ability to prevent the gradual performance degradation during the course of training with the extremely noisy pseudo labels (cf. \Figref{fig:imgc_elp}). Combined with previous results in domain shift scenarios, we expect that \texttt{AnCon} would work effectively in various out-of-distribution settings.

\subsection{Versatility of AnCon} \label{sec:versatility}  \vskip -0.1in
In previous sections, we have shown the universality of \texttt{AnCon} by evaluating it on diverse distribution shift scenarios. In this section, we show versatility of \texttt{AnCon} by analyzing its attractive properties in robustness and uncertainty representation.

\subsubsection{Robustness to model selection} \label{subsec:robust_model_selec}  \vskip -0.1in
There is no universally agreed model selection criterion, such as cross-validation in the i.i.d. setting, in self-training under distribution shifts. This is partly due to the variety of distribution shift scenarios, where an effective criterion in one may be ineffective or inapplicable in another; for instance, a principled criteria called importance-weighted cross validation \citep{sugiyama2007covariate} in UDA cannot be applied to SFDA. In this regard, it would be an important characteristic of a self-training method under distribution shift to be robust with respect to different choices of model selection criteria. Therefore, we evaluate robustness with respect to the following different model selection criteria: InfoMax \citep{shi2012infomax}, Corr-C \citep{tu2022assessing}, and Ent \citep{morerio2017minimal} (see Appendix \ref{appx:further_details} for the description).

\begin{wrapfigure}{r}{0.33\textwidth}
\vskip -0.3in
\begin{subfigure}[b]{0.3\textwidth}
         \centering
         \includegraphics[width=\textwidth]{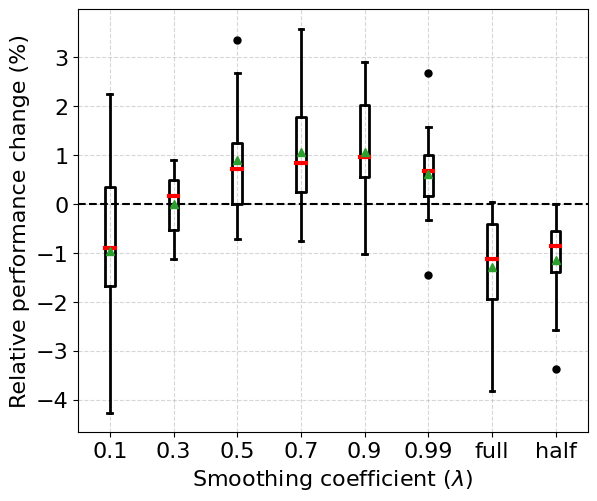}
         \caption{}
         \label{fig:sens:lambda}
     \end{subfigure}
     \begin{subfigure}[b]{0.3\textwidth}
         \centering
         \includegraphics[width=\textwidth]{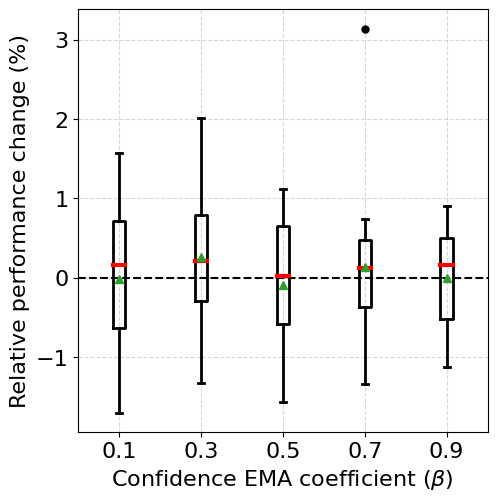}
         \caption{}
         \label{fig:sens:beta}
     \end{subfigure}
    \caption{Sensitivity analysis with respect to $\lambda$ and $\beta$ on four domain pairs (Ar-Pr, Pr-Cl, Rw-Cl, Rw-Pr) in OfficeHome. Here, green triangles are means.
    }
    \label{fig:senst_analy}
\label{sec:senst_analy}
\vskip -0.4in
\end{wrapfigure}
\Figref{fig:robust:selec} shows that \texttt{AnCon}'s maximum performance change due to different model selection methods is much lower than that of other methods, especially under severe distribution shifts. This valuable advantage can be contributed to the property of \texttt{AnCon} that can prevent performance degeneration (cf. \Figref{fig:imgc_elp}). Given that, in practice, we barely know when the model collapse happens and which model selection criteria are the best, the results highlight a significant practical value of \texttt{AnCon}.

\subsubsection{Robustness to the choice of hyperparameters}
\label{sec:senst_analy}  \vskip -0.1in
Throughout this paper, we have shown that our single configuration of parameters ($\lambda=0.3, \beta= 0.9$) work well across a wide range of benchmark problems. In this section, we aim to show our findings can be preserved when the hyperparameter values deviate from the default setting by performing a sensitivity analysis for values $\lambda \in \{0.1, 0.3, 0.5, 0.7, 0.9 \} $ and $\beta \in \{0.1, 0.3, 0.5, 0.7, 0.9 \} $. We also test two frequently used annealing schedules that $\lambda_m = m/I$ and $\lambda_m = \min(1, 2m / I)$, called full and half, respectively. Figure \ref{fig:senst_analy} shows that \texttt{AnCon} is stable even under extreme values of hyperparameters. Specifically, for both hyperparameters, the maximum average performance change is less than 1\%, and $\beta$ barely impacts the performance of \texttt{AnCon}. Indeed, our analysis suggests to increase $\lambda$ from our default setting; that is, to put a higher weight on the general temporal ensemble's prediction. Here, we note that our suboptimal choices of hyperparameters are due to our rigorous and practical hyperparameter choice. Given the challenging nature of hyperparameter optimization under distribution shifts, the stable performances of \texttt{AnCon} under arbitrary choices of hyperparameters would enable \texttt{AnCon} to be seamlessly applied to diverse practical settings.

\begin{wrapfigure}{r}{0.4\textwidth}
\vskip -0.4in
\begin{subfigure}[b]{.4\textwidth}
         \centering
         \includegraphics[width=\textwidth]{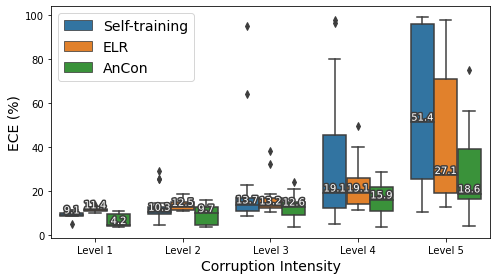}
         \caption{}
         \label{fig:ece:imgc}
     \end{subfigure}
     \begin{subfigure}[b]{.4\textwidth}
         \centering
         \includegraphics[width=\textwidth]{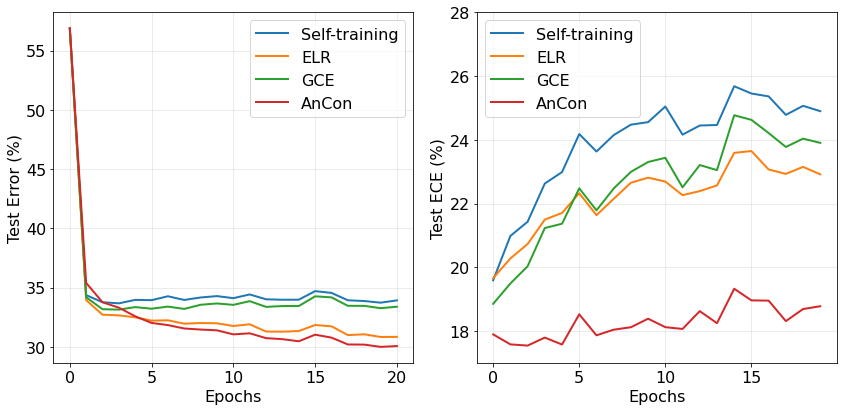}
         \caption{}
         \label{fig:ece:visda}
     \end{subfigure}
    \caption{(a) ECEs under five levels of intensities in ImageNet-C;  (b) Accuracy and ECE changes during the course of training in VisDa.
    }
    \label{fig:ece_analy}
    \vskip -0.3in
\end{wrapfigure}
\subsubsection{Improved calibration performance}  \vskip -0.1in
We have shown that all self-training methods significantly improve the performance of the baseline method after the adaptation period. However, it is widely known that these noticeable improvements come with the price of sacrificing an uncertainty representation ability which is critical in real-world decision-making scenarios \citep{amodei2016concrete, lee2004trust}. Specifically, the calibration performance, which is the gap between the prediction confidence and accuracy, usually monotonically increases as self-training keeps reducing the uncertainty for all predictions during the course of training. In this regard, we analyze the calibration performance with respect to the expected calibration error (ECE; see Appendix for definition). Here, a lower ECE means a lower gap between confidence and accuracy.

As shown in Figure \ref{fig:ece:imgc} and Table \ref{table:oh:ece} in Appendix, \texttt{AnCon} gives much lower ECE compared to other methods. Considering ELR and GCE both have regularization effects, we conjecture that this phenomenon is due to selective regularization in \texttt{AnCon} that increases prediction confidences of samples only if the past confident predictions are consistent with the current prediction. Especially, in \Figref{fig:ece:visda} which confirms the accuracy-calibration dilemma in VisDa, \texttt{AnCon} is shown to limit the ECE increases during training compared to all other methods. That is, \texttt{AnCon} helps to significantly reduce the price of the calibration performance we need to pay for improving accuracy, which are both important measures in practice.

\subsection{Algorithmic design choices} \label{sec:abl_study}  \vskip -0.1in
Recall that we define $\bar{f}(x; \theta_{0:m}, \mathbf{w}_{0:m}) = \sum_{i=0}^{m} w_i(x) \cdot p(y|x, \theta_i)$ with our simple design choices: the relative thresholding for weighting scheme $w_i(x) \propto \mathbf{1}(c(x; \theta_i) > \delta_i^{(\beta)}) $ and hard prediction for $p(y|x, \theta_i)$.
In Appendix \ref{appx:alg_des_choice}, we found that our simple design choices are more appropriate for the distribution shift settings than several more sophisticated alternatives, which can be summarized as follows.
\setlist{nolistsep}
\begin{itemize}[noitemsep] 
\item More sophisticated weighting schemes (e.g., Entropy ($w_i(x) \propto \exp\left\lbrace - H_E[f(x; \theta_i)]\right\rbrace$)) reduce the self-training performance, despite being a more accurate measure of prediction uncertainty. We conjecture that the poor calibration performance of the neural network in self-training under distribution shifts prevents the sophisticated weighting schemes from accurately reflecting the goodness of the prediction. 
\item Various soft prediction schemes, which can give more information about the non-leading entry values, leads to performance reductions. We conjecture that the continuously increasing confidence in the later stage of self-training would make soft prediction ignore early-stage predictions which may be valuable to memorize.
\end{itemize}

\section{Related work} \label{sec:literature} \vskip -0.1in
\textbf{Filtering incorrect pseudo labels } 
Popular confidence-based thresholding methods \citep{zhang2021flexmatch, wang2022freematch, berthelot2021adamatch} fall short under distribution shifts since even high confident predictions can be highly incorrect. Therefore, recent advances in SFDA and TTA utilize higher order information to filter incorrect pseudo labels. For instance, based on the intuition that true labels of adjacent samples would be same, centroids for each predicted class can be maintained in the feature space and then the pseudo label for each input is corrected by the adjacent centroid \citep{liang2020we}. The idea of using per-class centroids has been extended to incorporate more general clustering structures \citep{yang2021generalized, yang2021exploiting}. However, the neighborhood structure-based methods are computationally demanding due to storage of memory banks in the feature space and nearest neighbors search. Such computational complexity persists in other approaches, which are based on the consistency of multiple predictions from different augmentations \citep{karim2023c} and models trained with different loss functions \citep{yeh2022boosting}. Compared to these solutions, \texttt{AnCon} can efficiently estimate correct labels with only limited extra memory overhead of storing past predictions.

\textbf{Learning from noisy labels }
Treating pseudo labels as inherently noisy, techniques from the LFN literature have been integrated to self-training. For instance, the LFN literature has proposed robust loss functions that reduce impacts of random noisy labels \citep{liu2020peer, ghosh2017robust}, and a recent large-scale experimental study shows the applicability of the generalized cross-entropy in the SFDA setting \citep{rusak2022if}. The effectiveness of ELR on SFDA \citep{yi2023source} bears a similar idea because ELR was developed to regularize the neural networks' tendencies to memorize incorrect labels \citep{liu2020early}. Despite their effectiveness, by nature, these approaches do not consider important characteristics of the unbounded and instance-dependent noise rates inherent in self-training under distribution shifts, which results in significant suboptimality in both theory and practice. However, by considering the unique characteristics of self-training under distribution shift, \texttt{AnCon} relaxes the conditions required to achieve optimality as well as boosts the self-training performance in diverse scenarios.

\section{Conclusion} \label{sec:conclusion}  \vskip -0.1in
This paper introduces \texttt{AnCon}, which effectively improves self-training performances under diverse distribution shift scenarios by promoting selective temporal consistency based on confident predictions. As a result, \texttt{AnCon} effectively mitigates the detrimental effects of noisy pseudo labels without much computational overhead, unlike the previous methods. We show that \texttt{AnCon} not only advances our theoretical understanding of a generalized notion of temporal consistency in self-training but also can be a practical asset as a simple and effective self-training method with attractive properties. In Appendix \ref{appx:lim_future}, we present limitations and future directions, such as adaptive determination of $\lambda$, combining local and temporal consistency, and extending the selective temporal consistency in the sequential decision making problems.

\begin{ack}
We would like to thank Jihyeon Hyeong, Yuchen Lou, Jiezhong Wu, and anonymous reviewers for their valuable discussions and constructive suggestions during the preparation of this manuscript. We declare that there was no funding received for this work.
\end{ack}

\bibliography{neurips_2024}

\begin{thebibliography}{56}
\providecommand{\natexlab}[1]{#1}
\providecommand{\url}[1]{\texttt{#1}}
\expandafter\ifx\csname urlstyle\endcsname\relax
  \providecommand{\doi}[1]{doi: #1}\else
  \providecommand{\doi}{doi: \begingroup \urlstyle{rm}\Url}\fi

\bibitem[Buolamwini and Gebru(2018)]{buolamwini2018gender}
Joy Buolamwini and Timnit Gebru.
\newblock Gender shades: Intersectional accuracy disparities in commercial gender classification.
\newblock In \emph{Conference on Fairness, Accountability and Transparency}, 2018.

\bibitem[Recht et~al.(2019)Recht, Roelofs, Schmidt, and Shankar]{recht2019imagenet}
Benjamin Recht, Rebecca Roelofs, Ludwig Schmidt, and Vaishaal Shankar.
\newblock Do {I}magenet classifiers generalize to {I}magenet?
\newblock In \emph{International Conference on Machine Learning}, 2019.

\bibitem[Jean et~al.(2016)Jean, Burke, Xie, Davis, Lobell, and Ermon]{jean2016combining}
Neal Jean, Marshall Burke, Michael Xie, W~Matthew Davis, David~B Lobell, and Stefano Ermon.
\newblock Combining satellite imagery and machine learning to predict poverty.
\newblock \emph{Science}, 353\penalty0 (6301):\penalty0 790--794, 2016.

\bibitem[Beery et~al.(2018)Beery, Van~Horn, and Perona]{beery2018recognition}
Sara Beery, Grant Van~Horn, and Pietro Perona.
\newblock Recognition in terra incognita.
\newblock In \emph{European Conference on Computer Vision}, 2018.

\bibitem[Taori et~al.(2020)Taori, Dave, Shankar, Carlini, Recht, and Schmidt]{taori2020measuring}
Rohan Taori, Achal Dave, Vaishaal Shankar, Nicholas Carlini, Benjamin Recht, and Ludwig Schmidt.
\newblock Measuring robustness to natural distribution shifts in image classification.
\newblock In \emph{Advances in Neural Information Processing Systems}, 2020.

\bibitem[Hendrycks and Dietterich(2019)]{hendrycks2019benchmarking}
Dan Hendrycks and Thomas Dietterich.
\newblock Benchmarking neural network robustness to common corruptions and perturbations.
\newblock In \emph{International Conference on Learning Representations}, 2019.

\bibitem[Liu et~al.(2021)Liu, Wang, and Long]{liu2021cycle}
Hong Liu, Jianmin Wang, and Mingsheng Long.
\newblock Cycle self-training for domain adaptation.
\newblock In \emph{Advances in Neural Information Processing Systems}, 2021.

\bibitem[Yi et~al.(2023)Yi, Xu, Xu, Li, Pu, Ling, McLeod, and Wang]{yi2023source}
Li~Yi, Gezheng Xu, Pengcheng Xu, Jiaqi Li, Ruizhi Pu, Charles Ling, Ian McLeod, and Boyu Wang.
\newblock When source-free domain adaptation meets learning with noisy labels.
\newblock In \emph{International Conference on Learning Representations}, 2023.

\bibitem[Berthelot et~al.(2021)Berthelot, Roelofs, Sohn, Carlini, and Kurakin]{berthelot2021adamatch}
David Berthelot, Rebecca Roelofs, Kihyuk Sohn, Nicholas Carlini, and Alex Kurakin.
\newblock Adamatch: A unified approach to semi-supervised learning and domain adaptation.
\newblock \emph{arXiv preprint arXiv:2106.04732}, 2021.

\bibitem[Lee(2013)]{lee2013pseudo}
Dong-Hyun Lee.
\newblock Pseudo-label: The simple and efficient semi-supervised learning method for deep neural networks.
\newblock In \emph{Workshop on Challenges in Representation Learning, ICML}, 2013.

\bibitem[Arazo et~al.(2020)Arazo, Ortego, Albert, O’Connor, and McGuinness]{arazo2020pseudo}
Eric Arazo, Diego Ortego, Paul Albert, Noel~E O’Connor, and Kevin McGuinness.
\newblock Pseudo-labeling and confirmation bias in deep semi-supervised learning.
\newblock In \emph{International Joint Conference on Neural Networks}, 2020.

\bibitem[Sohn et~al.(2020)Sohn, Berthelot, Carlini, Zhang, Zhang, Raffel, Cubuk, Kurakin, and Li]{sohn2020fixmatch}
Kihyuk Sohn, David Berthelot, Nicholas Carlini, Zizhao Zhang, Han Zhang, Colin~A Raffel, Ekin~Dogus Cubuk, Alexey Kurakin, and Chun-Liang Li.
\newblock Fixmatch: Simplifying semi-supervised learning with consistency and confidence.
\newblock In \emph{Advances in Neural Information Processing Systems}, 2020.

\bibitem[Zhang et~al.(2021{\natexlab{a}})Zhang, Wang, Hou, Wu, Wang, Okumura, and Shinozaki]{zhang2021flexmatch}
Bowen Zhang, Yidong Wang, Wenxin Hou, Hao Wu, Jindong Wang, Manabu Okumura, and Takahiro Shinozaki.
\newblock Flexmatch: Boosting semi-supervised learning with curriculum pseudo labeling.
\newblock In \emph{Advances in Neural Information Processing Systems}, 2021{\natexlab{a}}.

\bibitem[Wang et~al.(2022)Wang, Chen, Heng, Hou, Fan, Wu, Wang, Savvides, Shinozaki, Raj, et~al.]{wang2022freematch}
Yidong Wang, Hao Chen, Qiang Heng, Wenxin Hou, Yue Fan, Zhen Wu, Jindong Wang, Marios Savvides, Takahiro Shinozaki, Bhiksha Raj, et~al.
\newblock Freematch: Self-adaptive thresholding for semi-supervised learning.
\newblock \emph{arXiv preprint arXiv:2205.07246}, 2022.

\bibitem[Ovadia et~al.(2019)Ovadia, Fertig, Ren, Nado, Sculley, Nowozin, Dillon, Lakshminarayanan, and Snoek]{ovadia2019can}
Yaniv Ovadia, Emily Fertig, Jie Ren, Zachary Nado, David Sculley, Sebastian Nowozin, Joshua Dillon, Balaji Lakshminarayanan, and Jasper Snoek.
\newblock Can you trust your model's uncertainty? evaluating predictive uncertainty under dataset shift.
\newblock In \emph{Advances in Neural Information Processing Systems}, 2019.

\bibitem[Liang et~al.(2020)Liang, Hu, and Feng]{liang2020we}
Jian Liang, Dapeng Hu, and Jiashi Feng.
\newblock Do we really need to access the source data? source hypothesis transfer for unsupervised domain adaptation.
\newblock In \emph{International Conference on Machine Learning}, 2020.

\bibitem[Yang et~al.(2021{\natexlab{a}})Yang, van~de Weijer, Herranz, Jui, et~al.]{yang2021exploiting}
Shiqi Yang, Joost van~de Weijer, Luis Herranz, Shangling Jui, et~al.
\newblock Exploiting the intrinsic neighborhood structure for source-free domain adaptation.
\newblock In \emph{Advances in Neural Information Processing Systems}, 2021{\natexlab{a}}.

\bibitem[Yang et~al.(2021{\natexlab{b}})Yang, Wang, Van De~Weijer, Herranz, and Jui]{yang2021generalized}
Shiqi Yang, Yaxing Wang, Joost Van De~Weijer, Luis Herranz, and Shangling Jui.
\newblock Generalized source-free domain adaptation.
\newblock In \emph{IEEE/CVF International Conference on Computer Vision}, 2021{\natexlab{b}}.

\bibitem[Yeh et~al.(2022)Yeh, Westfechtel, Huang, and Harada]{yeh2022boosting}
Hao-Wei Yeh, Thomas Westfechtel, Jia-Bin Huang, and Tatsuya Harada.
\newblock Boosting source-free domain adaptation via confidence-based subsets feature alignment.
\newblock In \emph{International Conference on Pattern Recognition}, 2022.

\bibitem[Karim et~al.(2023)Karim, Mithun, Rajvanshi, Chiu, Samarasekera, and Rahnavard]{karim2023c}
Nazmul Karim, Niluthpol~Chowdhury Mithun, Abhinav Rajvanshi, Han-pang Chiu, Supun Samarasekera, and Nazanin Rahnavard.
\newblock C-sfda: A curriculum learning aided self-training framework for efficient source free domain adaptation.
\newblock In \emph{IEEE/CVF Conference on Computer Vision and Pattern Recognition}, 2023.

\bibitem[Liu et~al.(2020)Liu, Niles-Weed, Razavian, and Fernandez-Granda]{liu2020early}
Sheng Liu, Jonathan Niles-Weed, Narges Razavian, and Carlos Fernandez-Granda.
\newblock Early-learning regularization prevents memorization of noisy labels.
\newblock In \emph{Advances in Neural Information Processing Systems}, 2020.

\bibitem[Zhang et~al.(2021{\natexlab{b}})Zhang, Bengio, Hardt, Recht, and Vinyals]{zhang2021understanding}
Chiyuan Zhang, Samy Bengio, Moritz Hardt, Benjamin Recht, and Oriol Vinyals.
\newblock Understanding deep learning (still) requires rethinking generalization.
\newblock \emph{Communications of the ACM}, 64\penalty0 (3):\penalty0 107--115, 2021{\natexlab{b}}.

\bibitem[Arpit et~al.(2017)Arpit, Jastrz{\k{e}}bski, Ballas, Krueger, Bengio, Kanwal, Maharaj, Fischer, Courville, Bengio, et~al.]{arpit2017closer}
Devansh Arpit, Stanis{\l}aw Jastrz{\k{e}}bski, Nicolas Ballas, David Krueger, Emmanuel Bengio, Maxinder~S Kanwal, Tegan Maharaj, Asja Fischer, Aaron Courville, Yoshua Bengio, et~al.
\newblock A closer look at memorization in deep networks.
\newblock In \emph{International Conference on Machine Learning}, 2017.

\bibitem[Szegedy et~al.(2016)Szegedy, Vanhoucke, Ioffe, Shlens, and Wojna]{szegedy2016rethinking}
Christian Szegedy, Vincent Vanhoucke, Sergey Ioffe, Jon Shlens, and Zbigniew Wojna.
\newblock Rethinking the inception architecture for computer vision.
\newblock In \emph{IEEE Conference on Computer Vision and Pattern Recognition}, 2016.

\bibitem[Pereyra et~al.(2017)Pereyra, Tucker, Chorowski, Kaiser, and Hinton]{pereyra2017regularizing}
Gabriel Pereyra, George Tucker, Jan Chorowski, {\L}ukasz Kaiser, and Geoffrey Hinton.
\newblock Regularizing neural networks by penalizing confident output distributions.
\newblock \emph{arXiv preprint arXiv:1701.06548}, 2017.

\bibitem[Hinton et~al.(2015)Hinton, Vinyals, and Dean]{hinton2015distilling}
Geoffrey Hinton, Oriol Vinyals, and Jeff Dean.
\newblock Distilling the knowledge in a neural network.
\newblock \emph{arXiv preprint arXiv:1503.02531}, 2015.

\bibitem[Yu et~al.(2022)Yu, Bates, Ma, and Jordan]{yu2022robust}
Yaodong Yu, Stephen Bates, Yi~Ma, and Michael Jordan.
\newblock Robust calibration with multi-domain temperature scaling.
\newblock In \emph{Advances in Neural Information Processing Systems}, 2022.

\bibitem[Joo and Klabjan(2024)]{joo2023iw}
Taejong Joo and Diego Klabjan.
\newblock {IW-GAE}: Importance weighted group accuracy estimation for improved calibration and model selection in unsupervised domain adaptation.
\newblock In \emph{International Conference on Machine Learning}, 2024.

\bibitem[He et~al.(2016)He, Zhang, Ren, and Sun]{he2016deep}
Kaiming He, Xiangyu Zhang, Shaoqing Ren, and Jian Sun.
\newblock Deep residual learning for image recognition.
\newblock In \emph{IEEE Conference on Computer Vision and Pattern Recognition}, 2016.

\bibitem[Safaryan et~al.(2023)Safaryan, Peste, and Alistarh]{safaryan2024knowledge}
Mher Safaryan, Alexandra Peste, and Dan Alistarh.
\newblock Knowledge distillation performs partial variance reduction.
\newblock In \emph{Advances in Neural Information Processing Systems}, 2023.

\bibitem[Rusak et~al.(2022)Rusak, Schneider, Pachitariu, Eck, Gehler, Bringmann, Brendel, and Bethge]{rusak2022if}
Evgenia Rusak, Steffen Schneider, George Pachitariu, Luisa Eck, Peter~Vincent Gehler, Oliver Bringmann, Wieland Brendel, and Matthias Bethge.
\newblock If your data distribution shifts, use self-learning.
\newblock \emph{Transactions on Machine Learning Research}, 2022.

\bibitem[Li et~al.(2020)Li, Soltanolkotabi, and Oymak]{li2020gradient}
Mingchen Li, Mahdi Soltanolkotabi, and Samet Oymak.
\newblock Gradient descent with early stopping is provably robust to label noise for overparameterized neural networks.
\newblock In \emph{International Conference on Artificial Intelligence and Statistics}, 2020.

\bibitem[Ghosh et~al.(2017)Ghosh, Kumar, and Sastry]{ghosh2017robust}
Aritra Ghosh, Himanshu Kumar, and P~Shanti Sastry.
\newblock Robust loss functions under label noise for deep neural networks.
\newblock In \emph{AAAI Conference on Artificial Intelligence}, 2017.

\bibitem[Press et~al.(2023)Press, Schneider, K{\"u}mmerer, and Bethge]{press2024rdumb}
Ori Press, Steffen Schneider, Matthias K{\"u}mmerer, and Matthias Bethge.
\newblock Rdumb: A simple approach that questions our progress in continual test-time adaptation.
\newblock In \emph{Advances in Neural Information Processing Systems}, 2023.

\bibitem[Shi and Sha(2012)]{shi2012infomax}
Yuan Shi and Fei Sha.
\newblock Information-theoretical learning of discriminative clusters for unsupervised domain adaptation.
\newblock In \emph{International Coference on International Conference on Machine Learning}, 2012.

\bibitem[Tu et~al.(2023)Tu, Deng, Gedeon, and Zheng]{tu2022assessing}
Weijie Tu, Weijian Deng, Tom Gedeon, and Liang Zheng.
\newblock Assessing model out-of-distribution generalization with softmax prediction probability baselines and a correlation method, 2023.
\newblock URL \url{https://openreview.net/forum?id=1maXoEyeqx}.

\bibitem[Hu et~al.(2023)Hu, Liang, Liew, Xue, Bai, and Wang]{hu2024mixed}
Dapeng Hu, Jian Liang, Jun~Hao Liew, Chuhui Xue, Song Bai, and Xinchao Wang.
\newblock Mixed samples as probes for unsupervised model selection in domain adaptation.
\newblock In \emph{Advances in Neural Information Processing Systems}, 2023.

\bibitem[Ioffe and Szegedy(2015)]{ioffe2015batch}
Sergey Ioffe and Christian Szegedy.
\newblock Batch normalization: Accelerating deep network training by reducing internal covariate shift.
\newblock In \emph{International Conference on Machine Learning}, 2015.

\bibitem[Salimans and Kingma(2016)]{salimans2016weight}
Tim Salimans and Durk~P Kingma.
\newblock Weight normalization: A simple reparameterization to accelerate training of deep neural networks.
\newblock In \emph{Advances in Neural Information Processing Systems}, 2016.

\bibitem[Saenko et~al.(2010)Saenko, Kulis, Fritz, and Darrell]{saenko2010adapting}
Kate Saenko, Brian Kulis, Mario Fritz, and Trevor Darrell.
\newblock Adapting visual category models to new domains.
\newblock In \emph{European Conference on Computer Vision}, 2010.

\bibitem[Venkateswara et~al.(2017)Venkateswara, Eusebio, Chakraborty, and Panchanathan]{venkateswara2017deep}
Hemanth Venkateswara, Jose Eusebio, Shayok Chakraborty, and Sethuraman Panchanathan.
\newblock Deep hashing network for unsupervised domain adaptation.
\newblock In \emph{IEEE Conference on Computer Vision and Pattern Recognition}, 2017.

\bibitem[Peng et~al.(2017)Peng, Usman, Kaushik, Hoffman, Wang, and Saenko]{peng2017visda}
Xingchao Peng, Ben Usman, Neela Kaushik, Judy Hoffman, Dequan Wang, and Kate Saenko.
\newblock Visda: The visual domain adaptation challenge.
\newblock \emph{arXiv preprint arXiv:1710.06924}, 2017.

\bibitem[Deng et~al.(2009)Deng, Dong, Socher, Li, Li, and Fei-Fei]{deng2009imagenet}
Jia Deng, Wei Dong, Richard Socher, Li-Jia Li, Kai Li, and Li~Fei-Fei.
\newblock Imagenet: A large-scale hierarchical image database.
\newblock In \emph{IEEE Conference on Computer Vision and Pattern Recognition}, 2009.

\bibitem[Sugiyama et~al.(2007)Sugiyama, Krauledat, and M{\"u}ller]{sugiyama2007covariate}
Masashi Sugiyama, Matthias Krauledat, and Klaus-Robert M{\"u}ller.
\newblock Covariate shift adaptation by importance weighted cross validation.
\newblock \emph{Journal of Machine Learning Research}, 8\penalty0 (5), 2007.

\bibitem[Morerio et~al.(2017)Morerio, Cavazza, and Murino]{morerio2017minimal}
Pietro Morerio, Jacopo Cavazza, and Vittorio Murino.
\newblock Minimal-entropy correlation alignment for unsupervised deep domain adaptation.
\newblock \emph{arXiv preprint arXiv:1711.10288}, 2017.

\bibitem[Amodei et~al.(2016)Amodei, Olah, Steinhardt, Christiano, Schulman, and Man{\'e}]{amodei2016concrete}
Dario Amodei, Chris Olah, Jacob Steinhardt, Paul Christiano, John Schulman, and Dan Man{\'e}.
\newblock Concrete problems in {AI} safety.
\newblock \emph{arXiv preprint arXiv:1606.06565}, 2016.

\bibitem[Lee and See(2004)]{lee2004trust}
John~D Lee and Katrina~A See.
\newblock Trust in automation: Designing for appropriate reliance.
\newblock \emph{Human Factors}, 46\penalty0 (1):\penalty0 50--80, 2004.

\bibitem[Liu and Guo(2020)]{liu2020peer}
Yang Liu and Hongyi Guo.
\newblock Peer loss functions: Learning from noisy labels without knowing noise rates.
\newblock In \emph{International Conference on Machine Learning}, 2020.

\bibitem[Polyak(1963)]{polyak1963gradient}
Boris~Teodorovich Polyak.
\newblock Gradient methods for minimizing functionals.
\newblock \emph{Zhurnal Vychislitel'noi Matematiki i Matematicheskoi Fiziki}, 3\penalty0 (4):\penalty0 643--653, 1963.

\bibitem[Charles and Papailiopoulos(2018)]{charles2018stability}
Zachary Charles and Dimitris Papailiopoulos.
\newblock Stability and generalization of learning algorithms that converge to global optima.
\newblock In \emph{International Conference on Machine Learning}, 2018.

\bibitem[Jacot et~al.(2018)Jacot, Gabriel, and Hongler]{jacot2018neural}
Arthur Jacot, Franck Gabriel, and Cl{\'e}ment Hongler.
\newblock Neural tangent kernel: Convergence and generalization in neural networks.
\newblock In \emph{Advances in Neural Information Processing Systems}, 2018.

\bibitem[Chernoff(1952)]{chernoff1952measure}
Herman Chernoff.
\newblock A measure of asymptotic efficiency for tests of a hypothesis based on the sum of observations.
\newblock \emph{The Annals of Mathematical Statistics}, pages 493--507, 1952.

\bibitem[Joo et~al.(2020)Joo, Chung, and Seo]{joo2020being}
Taejong Joo, Uijung Chung, and Min-Gwan Seo.
\newblock Being {B}ayesian about categorical probability.
\newblock In \emph{International Conference on Machine Learning}, 2020.

\bibitem[Gal and Ghahramani(2016)]{gal2016dropout}
Yarin Gal and Zoubin Ghahramani.
\newblock Dropout as a {B}ayesian approximation: Representing model uncertainty in deep learning.
\newblock In \emph{International Conference on Machine Learning}, 2016.

\bibitem[Long et~al.(2018)Long, Cao, Wang, and Jordan]{long2018conditional}
Mingsheng Long, Zhangjie Cao, Jianmin Wang, and Michael Jordan.
\newblock Conditional adversarial domain adaptation.
\newblock In \emph{Advances in Neural Information Processing Systems}, 2018.

\bibitem[Zhang et~al.(2019)Zhang, Liu, Long, and Jordan]{zhang2019bridging}
Yuchen Zhang, Tianle Liu, Mingsheng Long, and Michael Jordan.
\newblock Bridging theory and algorithm for domain adaptation.
\newblock In \emph{International Conference on Machine Learning}, 2019.

\end{thebibliography}

\newpage

\appendix

\section{List of acronyms}

\begin{table}[h!]
\begin{tabular}{@{}ll@{}}
\toprule
TTA & Test time adaptation \\ 
SFDA & Source-free domain adaptation \\ 
i.i.d. & independent and identically distributed \\ 
ELR & Early learning regularization \\ 
LFN & Learning from noisy labels \\ 
KD & Knowledge distillation \\
EMA & Exponential moving average \\ 
GCE & Generalized cross-entropy \\
NRC & Neighborhood reciprocity clustering \\
PL & Polyak-Lojasiewicz \\ 
\bottomrule
\end{tabular}
\end{table}
\section{Proof of claims}
\subsection{Assumptions} \label{appx:assumptions}
Let us recall that $l(\theta) =  \E_{XY} [H(f(X; \theta), p_{Y|X})] $ and $l_\xi(\theta) = \frac{1}{b} \sum_{X_i \in \xi} H(f(X_i; \theta), p_{Y|X_i}) $ where $\xi$ contains $b$ number of random samples from $p_X$.
We also denote $\theta^* \in \argmin_{\theta \in \R^p} l(\theta)$.
In this work, we assume the following three regularity conditions on $l(\theta)$ which is differentiable with respect to $\theta$, which are mild but essential for most theoretical studies with convergence analyses.

\begin{assumption}[$L$-smoothness] \label{assumption:L_smooth}
    $l(\cdot)$ is $L$-smooth for some constant $L > 0$; that is,
    \begin{equation}
        l(\theta^\prime) \leq l(\theta) + \langle \nabla l(\theta), \theta^\prime -\theta \rangle + \frac{L}{2} \parallel \theta^\prime - \theta \parallel^2, \quad \forall \theta \in \R^p \text{ and } \theta^\prime \in \R^p. 
    \end{equation}
\end{assumption}

\begin{assumption}[$\gL$-expected smoothness \citep{safaryan2024knowledge}] \label{assumption:exp_smooth}
    $l(\cdot)$ is $\gL$-smooth in expectation with respect to $\xi \sim \gD$; that is, 
    \begin{equation}
        \E_{\xi \sim \gD}[\parallel \nabla l_\xi (\theta) - \nabla l_\xi(\theta^*) \parallel^2] \leq 2 \gL (l(\theta) - l(\theta^*))), \quad \forall \theta \in \R^p.
    \end{equation}
\end{assumption}

\begin{assumption}[$\mu$-Polyak-Lojasiewicz (PL) condition \citep{polyak1963gradient}] \label{assumption:pl_cond}
    $l(\cdot)$ satisfies the $\mu$-PL condition for some constant $\mu > 0$; that is,
    \begin{equation}
        \parallel \nabla l(\theta) \parallel^2 \geq 2 \mu (l(\theta) - l(\theta^*)), \quad \forall \theta \in \R^p.
    \end{equation}
\end{assumption}

We remark that Assumptions \ref{assumption:L_smooth} and \ref{assumption:exp_smooth} can trivially hold under bounded parameter values that can be guaranteed by optimizing neural networks with finite iterations under a gradient or weight clipping. Assumption \ref{assumption:pl_cond} holds for infinite-width neural networks, i.e., the neural tangent kernel (NTK) regime \citep{charles2018stability}. Given that the gradient descent training dynamics of neural networks can be well approximated by NTK \citep{jacot2018neural}, the PL condition can be generally regarded as a mild assumption.

\subsection{Proof of Theorem \ref{thm:temporal_error}} \label{proof:3_1}

\begin{proof}
    For $x$ such that $Q(x; \mathbf{c}_{0:m}) > 0$, let us consider subsequence of index $l$ such that $x \in A_l(c_l)$; that is, $(j_1, \cdots, j_{Q(x; \mathbf{c}_{0:m})})$ where $j_{l} = \min\{l \geq j_{l-1} | x \in A_l(c_l) \}$ and $j_0 = 0$.
    Let $S_{Q(x; \mathbf{c}_{0:m})} = \sum_{i=1}^{Q(x; \mathbf{c}_{0:m})} \mathbf{1}(Y(x) = \hat{Y}(x; \theta_{j_i}))$. 
    Then, we have the following inequality
    \begin{multline}
        p\left(\argmax_{k} \bar{f}_k(x; \theta_{0:m}, \mathbf{w}_{0:m}) \neq Y(x)\right) 
        \leq p\left(S_{Q(x; \mathbf{c}_{0:m})} \leq \frac{Q(x; \mathbf{c}_{0:m})}{2}\right) \\ 
        \leq \exp\left( - \frac{Q(x; \mathbf{c}_{0:m})}{2} \cdot 
        \left( 2\bar{p}(x; \mathbf{c}_{0:m}) - 1 - \log(2\bar{p}(x; \mathbf{c}_{0:m})) \right) \right)
    \end{multline}
    where the first inequality holds because ${\argmax_{k} \bar{f}_k(x; \theta_{0:m}, \mathbf{w}_{0:m}) = Y(x)}$ if $S_{Q(x; \mathbf{c}_{0:m})} > Q(x; \mathbf{c}_{0:m}) / 2$ and the second inequality holds due to Lemma \ref{helper:bin_bound} given later in Appendix \ref{appx:supp_lemmas} with $p_i = p(Y(x) = \hat{Y}(x; \theta_i))$, $o = Q(x; \mathbf{c}_{0:m})$, and $q = 1/2$.
\end{proof}

\subsection{Proof of Theorem \ref{thm:optimality}} \label{proof_optimality}
\begin{proof}
    By Lemma \ref{lemma:loss_bound}, any realization of $\theta_m$ satisfies
    \begin{multline} 
        \E [l(\theta_{m,t}) - l(\theta^*) | \theta_m] \leq (1 - \gamma \mu)^t (l(\theta_m) - l(\theta^*)) + \frac{8C^2}{\mu} g^{\gE}(\theta_m)   \\
        + \frac{2}{  \mu} \left(   \lambda^2 \parallel \hat{g}(\theta_{0:m}, \mathbf{w}_{0:m}) \parallel^2  + \frac{L \gamma}{2} \E_\xi \left[ \parallel \nabla \tilde{l}_\xi(\theta^*) - \lambda \hat{g}_\xi(\theta_{0:m}, \mathbf{w}_{0:m}) \parallel^2 \right] \right).
    \end{multline}

    Then, from the definition $N(\lambda; \theta_{0:m}, \mathbf{w}_{0:m}) = \lambda^2 \parallel \hat{g}(\theta_{0:m}, \mathbf{w}_{0:m}) \parallel^2  + \frac{L \gamma}{2} \E_\xi \left[ \parallel \nabla \tilde{l}_\xi(\theta^*) - \lambda \hat{g}_\xi(\theta_{0:m}, \mathbf{w}_{0:m}) \parallel^2 \right]$, we get \eqref{eq:init_nbhd}. To show $N(\lambda_m^\dagger; \theta_{0:m}, \mathbf{w}_{0:m}) \leq N(0)$, we apply Lemma \ref{lemma:reduction_bound} as
    \begin{equation} \label{eq:reason:less_than_one}
        \frac{N(\lambda_m^\dagger; \theta_{0:m}, \mathbf{w}_{0:m})}{N(0)} 
        = 1 - \frac{\left( \E_\xi[\langle \nabla \tilde{l}_\xi (\theta^*), \hat{g}_\xi (\theta_{0:m}, \mathbf{w}_{0:m}) \rangle] \right)^2}{\left(\E_\xi \parallel\hat{g}_\xi(\theta_{0:m}, \mathbf{w}_{0:m}) \parallel^2 + \frac{2}{L \gamma} \parallel \hat{g}(\theta_{0:m}, \mathbf{w}_{0:m}) \parallel^2 \right) \left(\E_\xi \parallel \nabla \tilde{l}_\xi(\theta^*) \parallel^2 \right)}.
    \end{equation}

    To show \eqref{eq:var_red_thm_bound}, let us assume $\E_\xi[\langle \nabla \tilde{l}_\xi (\theta^*), \hat{g}_\xi (\theta_{0:m}, \mathbf{w}_{0:m}) \rangle] \geq 0$. Then, we get the following inequality:
    \begin{align}
        & \frac{\E_\xi[\langle \nabla \tilde{l}_\xi (\theta^*), \hat{g}_\xi (\theta_{0:m}, \mathbf{w}_{0:m}) \rangle] }{(\E_\xi \parallel \nabla \tilde{l}_\xi (\theta^*) \parallel^2)^{1/2} \cdot (\E_\xi \parallel \hat{g}_\xi (\theta_{0:m}, \mathbf{w}_{0:m}) \parallel^2)^{1/2} } \\ 
        & = \frac{
            \E_\xi \parallel \nabla \tilde{l}_\xi (\theta^*) \parallel^2 + \E_\xi \parallel \hat{g}_\xi (\theta_{0:m}, \mathbf{w}_{0:m}) \parallel^2 
            - \E_\xi \parallel \nabla \tilde{l}_\xi (\theta^*) - \hat{g}_\xi (\theta_{0:m}, \mathbf{w}_{0:m}) \parallel^2
        }{2 (\E_\xi \parallel \nabla \tilde{l}_\xi (\theta^*) \parallel^2)^{1/2} \cdot (\E_\xi \parallel \hat{g}_\xi (\theta_{0:m}, \mathbf{w}_{0:m}) \parallel^2)^{1/2}} 
        \\ 
        & \geq 1 - \min \left( 1, \frac{\E_\xi \parallel \nabla \tilde{l}_\xi (\theta^*) - \hat{g}_\xi (\theta_{0:m}, \mathbf{w}_{0:m}) \parallel^2
        }{2 (\E_\xi \parallel \nabla \tilde{l}_\xi (\theta^*) \parallel^2)^{1/2} \cdot (\E_\xi \parallel \hat{g}_\xi (\theta_{0:m}, \mathbf{w}_{0:m}) \parallel^2)^{1/2}} \right) \label{reason:naive}
        \\         
        & \geq 1 - \min \left(1, \frac{C^2 \E_X D_{KL}(f(X; \theta^*)  \parallel \bar{f}(X; \theta_{0:m}, \mathbf{w}_{0:m}))}{(\E_\xi \parallel \nabla \tilde{l}_\xi (\theta^*) \parallel^2)^{1/2} \cdot (\E_\xi \parallel \hat{g}_\xi (\theta_{0:m}, \mathbf{w}_{0:m}) \parallel^2)^{1/2}}\right)  \label{reason:pinsker}
    \end{align}
    where \eqref{reason:naive} holds due to $a^2 + b^2 \geq 2ab$; \eqref{reason:pinsker} holds due to the assumption $\parallel x \parallel \leq C$, $\parallel \cdot \parallel_2 \leq \parallel \cdot \parallel_1$, and the Pinsker's inequality applied as
    \begin{multline}
        \E_\xi \parallel \nabla \tilde{l}_\xi (\theta^*) - \hat{g}_\xi (\theta_{0:m}, \mathbf{w}_{0:m}) \parallel^2 
            \leq C^2 \E_\xi \frac{1}{b}\sum_{X_i \in \xi} \parallel f(X_i; \theta^*) - \bar{f}(X_i; \theta_{0:m}, \mathbf{w}_{0:m}) \parallel^2 \\
        \leq C^2 \E_\xi \frac{1}{b}\sum_{X_i \in \xi} \parallel f(X_i; \theta^*) - \bar{f}(X_i; \theta_{0:m}, \mathbf{w}_{0:m}) \parallel_1^2 \\
        \leq 2C^2 \E_X D_{KL}(f(X; \theta^*), \bar{f}(X; \theta_{0:m}, \mathbf{w}_{0:m})).            
    \end{multline}
    Thus, we get our desired result by combining the following inequality with \eqref{eq:reason:less_than_one}:
    \begin{gather}
        \frac{N(\lambda_m^\dagger; \theta_{0:m}, \mathbf{w}_{0:m})}{N(0)} 
        = 1 - \frac{\left( \E_\xi[\langle \nabla \tilde{l}_\xi (\theta^*), \hat{g}_\xi (\theta_{0:m}, \mathbf{w}_{0:m}) \rangle] \right)^2}{\left(\E_\xi \parallel\hat{g}_\xi(\theta_{0:m}, \mathbf{w}_{0:m}) \parallel^2 + \frac{2}{L \gamma} \parallel \hat{g}(\theta_{0:m}, \mathbf{w}_{0:m}) \parallel^2 \right) \left(\E_\xi \parallel \nabla \tilde{l}_\xi(\theta^*) \parallel^2 \right)} \\
        = 1 - \left( \frac{\left( \E_\xi[\langle \nabla \tilde{l}_\xi (\theta^*), \hat{g}_\xi (\theta_{0:m}, \mathbf{w}_{0:m}) \rangle] \right)^2}{(\E_\xi \parallel \nabla \tilde{l}_\xi (\theta^*) \parallel^2)^{1/2} \cdot (\E_\xi \parallel \hat{g}_\xi (\theta_{0:m}, \mathbf{w}_{0:m}) \parallel^2)^{1/2}} \right)^2 
            \frac{1}{1 + \frac{2}{L \gamma}\frac{\parallel \hat{g}(\theta_{0:m}, \mathbf{w}_{0:m}) \parallel^2}{\E_\xi \parallel\hat{g}_\xi(\theta_{0:m}, \mathbf{w}_{0:m}) \parallel^2} }  
    \end{gather}
    \begin{gather}
        \leq 1 - \left( 1 - \min \left(1, \frac{C^2 \E_X D_{KL}(f(X; \theta^*)  \parallel \bar{f}(X; \theta_{0:m}, \mathbf{w}_{0:m}))}{(\E_\xi \parallel \nabla \tilde{l}_\xi (\theta^*) \parallel^2)^{1/2} \cdot (\E_\xi \parallel \hat{g}_\xi (\theta_{0:m}, \mathbf{w}_{0:m}) \parallel^2)^{1/2}}\right) \right)^2 
            \frac{1}{1 + \frac{2}{L \gamma}\frac{\parallel \hat{g}(\theta_{0:m}, \mathbf{w}_{0:m}) \parallel^2}{\E_\xi \parallel\hat{g}_\xi(\theta_{0:m}, \mathbf{w}_{0:m}) \parallel^2} }  \\
        \leq 2\min \left(1, \frac{C^2 \E_X D_{KL}(f(X; \theta^*)  \parallel \bar{f}(X; \theta_{0:m}, \mathbf{w}_{0:m}))}{(\E_\xi \parallel \nabla \tilde{l}_\xi (\theta^*) \parallel^2)^{1/2} \cdot (\E_\xi \parallel \hat{g}_\xi (\theta_{0:m}, \mathbf{w}_{0:m}) \parallel^2)^{1/2}}\right) + \frac{2}{L\gamma} \frac{\parallel \hat{g}(\theta_{0:m}, \mathbf{w}_{0:m}) \parallel^2}{\E_\xi \parallel\hat{g}_\xi(\theta_{0:m}, \mathbf{w}_{0:m}) \parallel^2}
        \label{reason:last_ineq_alg} \\
        \leq 2\min \left(1, \frac{C^2 \E_X D_{KL}(f(X; \theta^*)  \parallel \bar{f}(X; \theta_{0:m}, \mathbf{w}_{0:m}))}{(\E_\xi \parallel \nabla \tilde{l}_\xi (\theta^*) \parallel^2)^{1/2} \cdot (\E_\xi \parallel \hat{g}_\xi (\theta_{0:m}, \mathbf{w}_{0:m}) \parallel^2)^{1/2}}\right) \qquad  \qquad  \\ 
        \qquad \qquad  \qquad + \frac{2 C^2}{L\gamma} \frac{\parallel \E_X[\bar{f}(X; \theta_{0:m}, \mathbf{w}_{0:m})] - \E_X[\hat{Y}(X; \theta_m)] \parallel^2}{\E_\xi \parallel\hat{g}_\xi(\theta_{0:m}, \mathbf{w}_{0:m}) \parallel^2}
    \end{gather}
    where \eqref{reason:last_ineq_alg} holds due to $1 - \frac{(1 - u)^2}{1 + v} \leq 2u + v$ \citep{safaryan2024knowledge}.
\end{proof}

\subsection{Proof of Corollary \ref{thm:corollary}} \label{appx:proof_corollary}

\begin{proof}
For any realization of $\theta_m$ for $m \in \{0, 1, \cdots, \hat{T} - 1 \}$, applying Theorem \ref{thm:optimality} with $\lambda = \lambda_m^\dagger$ (cf. Lemma \ref{lemma:reduction_bound}) gives
\begin{multline} 
    \E [l(\theta_{m+1}) - l^* | \theta_m] 
    = \E[l(\theta_{m,T}) - l^* | \theta_m] \\ 
    \leq (1 - \mu \gamma)^T (l(\theta_{m}) - l^*) + \frac{8C^2}{\mu} g^{\gE}(\theta_m) + \frac{2}{\mu} N(\lambda_m^\dagger; \theta_{0:m}, \mathbf{w}_{0:m}). \label{eq:simple_var_red_thm_bound}
\end{multline}

Thus, by recursively applying \eqref{eq:simple_var_red_thm_bound} to $\theta_0$, we get our desired result as 
\begin{equation}
    \E [l(\theta_{\hat{T}}) - l^* ]  
        \leq (1 - \mu \gamma)^{T\cdot (\hat{T}-1)}(l(\theta_{0}) - l^*) 
            + \sum_{i=0}^{\hat{T} - 1} (1 - \mu \gamma)^{T\cdot (\hat{T} - 1 - i)} \left( \frac{8C^2}{\mu} g^{\gE}(\theta_i) + \frac{2}{\mu} N(\lambda_i^\dagger; \theta_{0:i}, \mathbf{w}_{0:i}) \right).
\end{equation}
\end{proof}

\subsection{Supporting lemmas} \label{appx:supp_lemmas}

\begin{lemma} \label{helper:bin_bound}
    Let $A_i \sim Bern(p_i)$ with $p_i \in [0,1]$ and $S_o = \sum_{i \in [o]} A_i$, where $A_i$ and $A_j$ are independent for $i \neq j$.
    Then for any $q < \bar{p} := \frac{1}{o}\sum_{i\in[o]} p_i$, we have the tail probability bound
    \begin{equation}
        p(S_o \leq q\cdot o) \leq \exp \left( o\left(q - \bar{p} - q \log 
    \frac{q}{\bar{p}} \right)  \right).
    \end{equation}    
\end{lemma}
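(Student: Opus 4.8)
The plan is to establish this lower-tail bound by the standard Chernoff (exponential-moment) method, adapted to a sum of \emph{non-identically} distributed Bernoulli variables.

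First I would introduce a free parameter $t > 0$ and observe that the event $\{S_o \le q o\}$ coincides with $\{e^{-t S_o} \ge e^{-t q o}\}$; applying Markov's inequality to the nonnegative variable $e^{-t S_o}$ then gives
\begin{equation}
    p(S_o \le q o) \le e^{t q o}\, \E\!\left[e^{-t S_o}\right].
\end{equation}

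Second, using the mutual independence of the $A_i$, I would factor the moment generating function and evaluate each factor explicitly:
\begin{equation}
    \E\!\left[e^{-t S_o}\right] = \prod_{i \in [o]} \E\!\left[e^{-t A_i}\right] = \prod_{i \in [o]} \left(1 - p_i(1 - e^{-t})\right).
\end{equation}
The decisive step, which absorbs the heterogeneity of the $p_i$ into the single average $\bar{p}$, is to apply the elementary inequality $1 - x \le e^{-x}$ to each factor with $x = p_i(1 - e^{-t})$. This collapses the product into $\exp(-o\,\bar{p}(1 - e^{-t}))$ and yields the one-parameter family of bounds $p(S_o \le q o) \le \exp\bigl(o\,[t q - \bar{p}(1 - e^{-t})]\bigr)$, one for each admissible $t$.

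Finally, I would optimize over $t > 0$ by minimizing the exponent $\phi(t) := t q - \bar{p}(1 - e^{-t})$. Solving $\phi'(t) = q - \bar{p} e^{-t} = 0$ gives the minimizer $t^\star = \log(\bar{p}/q)$, and substituting $e^{-t^\star} = q/\bar{p}$ back into $\phi$ produces exactly the exponent $q - \bar{p} - q\log(q/\bar{p})$ claimed in the statement.

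I do not anticipate a genuine obstacle, as this is a textbook Chernoff computation; the only point requiring care is verifying that the minimizer $t^\star = \log(\bar{p}/q)$ is strictly positive, which holds precisely because of the hypothesis $q < \bar{p}$. That is exactly where the assumption enters, and it also ensures that the resulting exponent is negative so the bound is nontrivial.
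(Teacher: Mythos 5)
Your proposal is correct and follows essentially the same route as the paper's proof: a Chernoff bound on the lower tail, factorization of the moment generating function by independence, the elementary inequality $1 - x \le e^{-x}$ (equivalently $\log x \le x - 1$, as the paper writes it) to replace the heterogeneous $p_i$ by their average $\bar{p}$, and the same optimal parameter choice $e^{-t^\star} = q/\bar{p}$ — the only cosmetic difference is that you work with $e^{-tS_o}$ and $t > 0$ where the paper uses $e^{tS_o}$ with $t < 0$. No gaps.
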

\begin{proof}
    For any $t$, the moment generating function of $S_m$ is given by 
    \begin{multline}
        M(t) = \E_{S_o} \left[ \exp(t S_o)  \right]
        = \prod_{i \in [o]} \E_{A_i} \left[ \exp(t A_i) \right] 
        =\exp\left(\sum_{i \in [o]} \log (\E_{A_i} [\exp(tA_i)] )\right) \\ 
        = \exp\left(\sum_{i \in [o]} \log (p_i(\exp(t) - 1) +1 )\right) 
        \leq \exp\left( \sum_{i \in [o]} p_i (\exp(t) - 1) \right) 
        = \exp(o \bar{p} (\exp(t) - 1))
    \end{multline}
    where the first equality comes from the independence and the inequality is from $\log x \leq x - 1$. 

    Then, for $t < 0$ and $q < \bar{p}$, the Chernoff bound \citep{chernoff1952measure} gives 
    \begin{equation}
        p(S_o \leq q\cdot o) \leq M(t) \exp(-t \cdot q \cdot o) = \exp(o \bar{p} (\exp(t) - 1) -t \cdot q \cdot o).
    \end{equation}

    Finally, setting $t = \log \left(q / \bar{p}\right) < 0$ gives our desired result. 
\end{proof}

\begin{lemma}[Modification from \citep{safaryan2024knowledge}] \label{lemma:loss_bound}
    Let us assume that $l(\theta)$ satisfies the $L$-smoothness, the $\gL$-expected smoothness, and the $\mu$-PL condition. 
    Also, we assume a linear model $f_k(x; \theta) = \{ \exp(\theta^T x)\}_k / \sum_{i \in [K]} \{\exp(\theta^T x)\}_i$ with $\theta \in \R^{d\times K}$.
    Then, for the stochastic gradient descent $\theta_{m,t+1} = \theta_{m, t} - \gamma g_\xi^{(m,t)}$ with 
        $g_\xi^{(m,t)} = \nabla \hat{\E}_{X_i \in \xi} [H(f(X_i; \theta_{m,t}), \tilde{Y}(X_i; \theta_{0:m}, \mathbf{w}_{0:m})]$, 
        $\theta_{m,0} := \theta_m$, 
        and a constant learning rate $\gamma \leq \frac{\mu}{4 \gL \cdot L}$, 
        any realization of $\theta_m$ satisfies
    \begin{multline}
        \E [l(\theta_{m,t}) - l(\theta^*) | \theta_m] \leq (1 - \gamma \mu)^t (l(\theta_m) - l(\theta^*)) + \frac{8C^2}{\mu} g^{\gE}(\theta_m)   \\
        + \frac{1}{  \mu} \left( 2  \lambda^2 \parallel \hat{g}(\theta_{0:m}, \mathbf{w}_{0:m}) \parallel^2  + L \gamma \E_\xi \left[ \parallel \nabla \tilde{l}_\xi(\theta^*) - \lambda \hat{g}_\xi(\theta_{0:m}, \mathbf{w}_{0:m}) \parallel^2 \right] \right).
    \end{multline}
\end{lemma}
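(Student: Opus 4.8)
The plan is to run the standard $L$-smoothness descent analysis while carefully tracking the two sources of bias that distinguish \texttt{AnCon}'s update $g_\xi^{(m,t)} = \nabla \tilde{l}_\xi(\theta_{m,t}) - \lambda \hat{g}_\xi(\theta_{0:m}, \mathbf{w}_{0:m})$ from an unbiased stochastic gradient of $l$: the substitution of pseudo labels $\hat{Y}(\cdot;\theta_m)$ for true labels, and the distillation correction $\lambda \hat{g}_\xi$. First I would apply Assumption \ref{assumption:L_smooth} to the step $\theta_{m,t+1} = \theta_{m,t} - \gamma g_\xi^{(m,t)}$ and take the mini-batch expectation, obtaining
\begin{equation*}
    \E_\xi[l(\theta_{m,t+1})] \leq l(\theta_{m,t}) - \gamma \langle \nabla l(\theta_{m,t}), \E_\xi g_\xi^{(m,t)} \rangle + \tfrac{L \gamma^2}{2} \E_\xi \parallel g_\xi^{(m,t)} \parallel^2 .
\end{equation*}
The goal is to convert this into a one-step contraction $\E_\xi[l(\theta_{m,t+1}) - l^*] \leq (1 - \gamma \mu)(l(\theta_{m,t}) - l^*) + B$ with a $\theta_{m,t}$-independent additive constant $B$, then unroll using $\sum_{s \geq 0}(1-\gamma\mu)^s \leq \tfrac{1}{\gamma\mu}$.

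The key structural observation---and the modification of \citep{safaryan2024knowledge} needed here---is that for the linear softmax model the per-coordinate gradient of $H$ is $(f_j(x;\theta) - \text{target}_j)x$, so gradient \emph{differences} are independent of the target. Hence the pseudo-label bias $\Delta := \nabla \tilde{l}(\theta) - \nabla l(\theta) = \E_X[(p_{Y|X} - \hat{Y}(X;\theta_m)) \otimes X]$ and the correction $\lambda \hat{g}_\xi$ are both constant in $\theta$, giving the exact identity $g_\xi^{(m,t)}(\theta) - g_\xi^{(m,t)}(\theta^*) = \nabla l_\xi(\theta) - \nabla l_\xi(\theta^*)$. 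This transports the $\gL$-expected smoothness of the true-label loss (Assumption \ref{assumption:exp_smooth}) directly onto \texttt{AnCon}'s gradient, so that with $\parallel a\parallel^2 \leq 2\parallel a-c\parallel^2 + 2\parallel c\parallel^2$ at $c = g_\xi^{(m,t)}(\theta^*) = \nabla \tilde{l}_\xi(\theta^*) - \lambda \hat{g}_\xi$ I get
\begin{equation*}
    \E_\xi \parallel g_\xi^{(m,t)} \parallel^2 \leq 4 \gL (l(\theta_{m,t}) - l^*) + 2 \E_\xi \parallel \nabla \tilde{l}_\xi(\theta^*) - \lambda \hat{g}_\xi(\theta_{0:m}, \mathbf{w}_{0:m}) \parallel^2 ,
\end{equation*}
which already produces the variance-at-optimum piece of the neighborhood term.

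For the inner product I would write $\E_\xi g_\xi^{(m,t)} = \nabla l(\theta_{m,t}) + \Delta - \lambda \hat{g}(\theta_{0:m}, \mathbf{w}_{0:m})$ and split the cross term via Young's inequality $-\langle u, v\rangle \leq \tfrac{\rho}{2}\parallel u\parallel^2 + \tfrac{1}{2\rho}\parallel v\parallel^2$ with $\rho = \tfrac12$, retaining $\tfrac34\parallel \nabla l\parallel^2$ for the PL step (Assumption \ref{assumption:pl_cond}) and charging $\gamma\parallel \Delta - \lambda\hat{g}\parallel^2$ to the bias. Combining the PL conversion $-\tfrac{3\gamma}{4}\parallel\nabla l\parallel^2 \leq -\tfrac{3\gamma\mu}{2}(l - l^*)$ with the variance contribution $2L\gL\gamma^2(l-l^*)$ and the step-size condition $\gamma \leq \tfrac{\mu}{4\gL L}$ (so that $2L\gL\gamma^2 \leq \tfrac{\gamma\mu}{2}$) collapses the coefficient of $(l - l^*)$ to exactly $1 - \gamma\mu$. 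It then remains to bound the bias constant: $\parallel \Delta\parallel^2 \leq C^2 \E_X \parallel p_{Y|X} - \hat{Y}(X;\theta_m)\parallel^2 \lesssim C^2 g^{\gE}(\theta_m)$ since the integrand vanishes wherever the pseudo label is correct, and $\parallel \Delta - \lambda\hat{g}\parallel^2 \leq 2\parallel\Delta\parallel^2 + 2\lambda^2\parallel\hat{g}\parallel^2$. Unrolling the recursion and absorbing the leading $\gamma$ into $\tfrac{1}{\gamma\mu}$ then yields the three advertised terms.

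I expect the main obstacle to be the bookkeeping around the contraction factor rather than any single inequality. Because the update is biased, the naive split that sends half of $\parallel\nabla l\parallel^2$ to the cross term only delivers a $(1 - \gamma\mu/2)$ contraction, whose geometric sum $\tfrac{2}{\gamma\mu}$ would double the variance-at-optimum coefficient and break the stated bound; recovering the exact $1 - \gamma\mu$ requires the tighter Young's parameter $\rho = \tfrac12$ together with the precise step-size budget, so the delicate point is checking that these two constraints are simultaneously compatible. A secondary care point is confirming that the label-independence of gradient differences holds for the \emph{smoothed} target $\tilde{Y}$ rather than just the hard pseudo label---which it does, since $\tilde{Y}$ is a $\theta$-independent convex combination of $\hat{Y}(\cdot;\theta_m)$ and $\bar{f}(\cdot;\theta_{0:m},\mathbf{w}_{0:m})$, so its contribution to the gradient difference still cancels.
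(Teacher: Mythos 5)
Your proposal is correct and follows essentially the same route as the paper's own proof: the $L$-smoothness descent step, the Peter--Paul split $\langle a,b\rangle \leq \tfrac{1}{4}\parallel a \parallel^2 + \parallel b \parallel^2$ combined with the PL condition and the step-size budget $\gamma \leq \tfrac{\mu}{4\gL L}$ to recover the exact $(1-\gamma\mu)$ contraction, the transfer of $\gL$-expected smoothness to $\nabla \tilde{l}_\xi$ via the target-independence of gradient differences in the linear softmax model, the $O(C^2)\, g^{\gE}(\theta_m)$ bias bound obtained by conditioning on the event $\mathbf{1}(Y(X) = \hat{Y}(X;\theta_m))$, and the geometric unrolling with $\sum_s (1-\gamma\mu)^s \leq \tfrac{1}{\gamma\mu}$. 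The only difference is expository: you state explicitly the gradient-difference identity (and verify it for the smoothed target $\tilde{Y}$) that the paper invokes implicitly when applying Assumption \ref{assumption:exp_smooth} to $\nabla \tilde{l}_\xi$.
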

\begin{proof}
For any realization of $\theta_{m,t}$, we have the following bound:
\begin{align}
    & \E_\xi [l(\theta_{m, t+1}) - l(\theta^*) | \theta_{m,t}]
    \leq (l(\theta_{m,t}) - l(\theta^*)) - \gamma \langle \nabla l(\theta_{m,t}), g^{(m,t)} \rangle + \frac{L \gamma^2}{2} \E_\xi \left[ \parallel g^{(m,t)}_\xi \parallel^2 \right] \label{eq:L_smoothness} \\
    & = (l(\theta_{m,t}) - l(\theta^*)) - \gamma \langle \nabla l(\theta_{m,t}), \nabla  l(\theta_{m,t}) - b^{(m,t)} \rangle  + \frac{L \gamma^2}{2} \E_\xi \left[ \parallel \nabla \tilde{l}_\xi(\theta_{m,t}) - \lambda \hat{g}_\xi(\theta_{0:m}, \mathbf{w}_{0:m}) \parallel^2 \right] \label{eq:linearity_equal} \\ 
    & = (l(\theta_{m,t}) - l(\theta^*)) - \gamma \parallel \nabla l(\theta_{m,t}) \parallel^2 + \gamma \langle \nabla l(\theta_{m,t}),  b^{(m,t)} \rangle  \nonumber \\
        & \qquad + \frac{L \gamma^2}{2} \E_\xi \left[ \parallel \nabla \tilde{l}_\xi(\theta_{m,t}) - \lambda \hat{g}_\xi(\theta_{0:m}, \mathbf{w}_{0:m}) \parallel^2 \right] \\ 
    & \leq (1 - \gamma \mu)(l(\theta_{m,t}) - l(\theta^*)) - \frac{\gamma \mu}{2} (l(\theta_{m,t}) - l(\theta^*)) + \gamma \parallel b^{(m,t)} \parallel^2 \nonumber \\
        & \qquad + \frac{L \gamma^2}{2} \E_\xi \left[ \parallel \nabla \tilde{l}_\xi(\theta_{m,t}) - \lambda \hat{g}_\xi(\theta_{0:m}, \mathbf{w}_{0:m}) \parallel^2 \right]  \label{eq:peter_paul_ineq} \\ 
    & \leq (1 - \gamma \mu)(l(\theta_{m,t}) - l(\theta^*)) - \frac{\gamma \mu}{2} (l(\theta_{m,t}) - l(\theta^*)) + \gamma \parallel b^{(m,t)} \parallel^2 \nonumber \\ 
        & \qquad +  L \gamma^2 \E_\xi \left[ \parallel \nabla \tilde{l}_\xi(\theta_{m,t}) - \nabla \tilde{l}_\xi(\theta^*) \parallel^2 \right]  + L \gamma^2 \E_\xi \left[ \parallel \nabla \tilde{l}_\xi(\theta^*) - \lambda \hat{g}_\xi(\theta_{0:m}, \mathbf{w}_{0:m}) \parallel^2 \right]  \label{eq:norm_ineq} \\ 
    & \leq (1 - \gamma \mu)(l(\theta_{m,t}) - l(\theta^*)) - \frac{\gamma \mu}{2} \left(1 - \gamma \frac{4L \gL}{\mu} \right) (l(\theta_{m,t}) - l(\theta^*)) + \gamma \parallel b^{(m,t)} \parallel^2 \nonumber \\ 
        & \qquad + L \gamma^2 \E_\xi \left[ \parallel \nabla \tilde{l}_\xi(\theta^*) - \lambda \hat{g}_\xi(\theta_{0:m}, \mathbf{w}_{0:m}) \parallel^2 \right]  \label{eq:noise_bound} \\ 
    & \leq (1 - \gamma \mu)(l(\theta_{m,t}) - l(\theta^*)) + \gamma \parallel b^{(m,t)} \parallel^2 + L \gamma^2 \E_\xi \left[ \parallel \nabla \tilde{l}_\xi(\theta^*) - \lambda \hat{g}_\xi(\theta_{0:m}, \mathbf{w}_{0:m}) \parallel^2 \right]  \label{eq:lr_choice} \\ 
    & \leq (1 - \gamma \mu)(l(\theta_{m,t}) - l(\theta^*)) + 2 \gamma \parallel \nabla l(\theta_{m,t}) - \nabla \tilde{l}(\theta_{m,t}) \parallel^2 \nonumber \\
        & \qquad + 2 \gamma \lambda^2 \parallel  \hat{g}(\theta_{0:m}, \mathbf{w}_{0:m}) \parallel^2 + L \gamma^2 \E_\xi \left[ \parallel \nabla \tilde{l}_\xi(\theta^*) - \lambda \hat{g}_\xi(\theta_{0:m}, \mathbf{w}_{0:m}) \parallel^2 \right]  \\
    & \leq (1 - \gamma \mu)(l(\theta_{m,t}) - l(\theta^*)) + 8 \gamma C^2
 \cdot g^{\gE}(\theta_m) \nonumber \\ 
        & \qquad + 2 \gamma \lambda^2 \parallel \hat{g}(\theta_{0:m}, \mathbf{w}_{0:m}) \parallel^2 + L \gamma^2 \E_\xi \left[ \parallel \nabla \tilde{l}_\xi(\theta^*) - \lambda \hat{g}_\xi(\theta_{0:m}, \mathbf{w}_{0:m}) \parallel^2 \right]  \label{eq:bound_err}
\end{align} 
where \eqref{eq:L_smoothness} holds due to $L$-smoothness; 
\eqref{eq:linearity_equal} holds due to $g_\xi^{(m,t)} = \nabla l_\xi(\theta_{m,t}) - b_{\xi}^{(m,t)} = \nabla \tilde{l}_\xi (\theta_{m,t}) - \lambda \hat{g}_{\xi}(\theta_{0:m}, \mathbf{w}_{0:m})$; 
\eqref{eq:peter_paul_ineq} holds due to $\langle a,b \rangle \leq \frac{1}{4} \parallel a \parallel^2 + \parallel b \parallel^2$; \eqref{eq:norm_ineq} holds due to the inequality $\parallel x + y \parallel^2 \leq 2 \parallel x \parallel^2 + 2 \parallel y \parallel^2 $; \eqref{eq:noise_bound} holds due to the expected smoothness assumption; \eqref{eq:lr_choice} holds due to the condition $\gamma \leq \frac{1}{4\gL} \frac{\mu}{L}$; \eqref{eq:bound_err} holds due to the law of total expectation with the random event $ \mathbf{1}(Y(X) = \hat{Y}(X; \theta_m)) $ and then applying the bounded support assumption.

By recursively applying the above bound to any realization of $\theta_m$, we get our desired result as follows:
\begin{multline}
    \E [l(\theta_{m,t}) - l(\theta^*) | \theta_m] \leq (1 - \gamma \mu)^t (l(\theta_m) - l(\theta^*)) + \frac{8 C^2}{\mu}
 \cdot g^{\gE}(\theta_m)   \\
        + \frac{1}{\gamma  \mu} \left( 2 \gamma \lambda^2 \parallel \hat{g}(\theta_{0:m}, \mathbf{w}_{0:m}) \parallel^2  + L \gamma^2 \E_\xi \left[ \parallel \nabla \tilde{l}_\xi(\theta^*) - \lambda \hat{g}_\xi(\theta_{0:m}, \mathbf{w}_{0:m}) \parallel^2 \right] \right) \\
    = (1 - \gamma \mu)^t (l(\theta_m) - l(\theta^*)) + \frac{8C^2}{\mu} g^{\gE}(\theta_m)   \\
        + \frac{1}{  \mu} \left( 2  \lambda^2 \parallel \hat{g}(\theta_{0:m}, \mathbf{w}_{0:m}) \parallel^2  + L \gamma \E_\xi \left[ \parallel \nabla \tilde{l}_\xi(\theta^*) - \lambda \hat{g}_\xi(\theta_{0:m}, \mathbf{w}_{0:m}) \parallel^2 \right] \right).
\end{multline}
\end{proof}

\begin{lemma}[Lemma 1 in \citep{safaryan2024knowledge}] \label{lemma:reduction_bound}
    Let $N(\lambda; \theta_{0:m}, \mathbf{w}_{0:m}) := \lambda^2 \parallel \hat{g}(\theta_{0:m}, \mathbf{w}_{0:m}) \parallel^2  + \frac{L\gamma}{2}  \E_\xi \left[ \parallel \nabla \tilde{l}_\xi(\theta^*) - \lambda \hat{g}_\xi(\theta_{0:m}, \mathbf{w}_{0:m}) \parallel^2 \right]$. 
    Then, for $\lambda_m^\dagger = \frac{\E_\xi[\langle \nabla \tilde{l}_\xi (\theta^*), \hat{g}_\xi (\theta_{0:m}, \mathbf{w}_{0:m}) \rangle] }{ \E_\xi \parallel \hat{g}_\xi (\theta_{0:m}, \mathbf{w}_{0:m}) \parallel^2 + \frac{2}{L \gamma}\parallel \hat{g}(\theta_{0:m}, \mathbf{w}_{0:m}) \parallel^2}$, we have
    \begin{equation}
        \frac{N(\lambda_m^\dagger; \theta_{0:m}, \mathbf{w}_{0:m})}{N(0)} 
        = 1 - \frac{\left( \E_\xi[\langle \nabla \tilde{l}_\xi (\theta^*), \hat{g}_\xi (\theta_{0:m}, \mathbf{w}_{0:m}) \rangle] \right)^2}{\left(\E_\xi \parallel\hat{g}_\xi(\theta_{0:m}, \mathbf{w}_{0:m}) \parallel^2 + \frac{2}{L\gamma} \parallel \hat{g}(\theta_{0:m}, \mathbf{w}_{0:m}) \parallel^2 \right) \left(\E_\xi \parallel \nabla \tilde{l}_\xi(\theta^*) \parallel^2 \right)}. 
    \end{equation}
\end{lemma}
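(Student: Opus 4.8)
The plan is to treat $N(\lambda; \theta_{0:m}, \mathbf{w}_{0:m})$ as a one–dimensional quadratic in the scalar $\lambda$ and to evaluate it at the stated $\lambda_m^\dagger$, which turns out to be exactly its vertex. First I would expand the expected squared norm by linearity of expectation,
\[
\E_\xi\!\left[\parallel \nabla \tilde{l}_\xi(\theta^*) - \lambda \hat{g}_\xi(\theta_{0:m}, \mathbf{w}_{0:m}) \parallel^2\right]
= \E_\xi \parallel \nabla \tilde{l}_\xi(\theta^*)\parallel^2
- 2\lambda\, \E_\xi[\langle \nabla \tilde{l}_\xi(\theta^*), \hat{g}_\xi(\theta_{0:m}, \mathbf{w}_{0:m})\rangle]
+ \lambda^2\, \E_\xi \parallel \hat{g}_\xi(\theta_{0:m}, \mathbf{w}_{0:m})\parallel^2 ,
\]
and substitute it into the definition of $N$. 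Combining the two $\lambda^2$ contributions (one from $\lambda^2\parallel \hat{g}(\theta_{0:m}, \mathbf{w}_{0:m})\parallel^2$, one from the expanded term) puts $N$ in the canonical form $N(\lambda) = a\lambda^2 - 2b\lambda + c$ with $a = \parallel \hat{g}(\theta_{0:m}, \mathbf{w}_{0:m})\parallel^2 + \tfrac{L\gamma}{2}\E_\xi\parallel \hat{g}_\xi(\theta_{0:m}, \mathbf{w}_{0:m})\parallel^2$, $b = \tfrac{L\gamma}{2}\E_\xi[\langle \nabla \tilde{l}_\xi(\theta^*), \hat{g}_\xi(\theta_{0:m}, \mathbf{w}_{0:m})\rangle]$, and $c = \tfrac{L\gamma}{2}\E_\xi\parallel \nabla \tilde{l}_\xi(\theta^*)\parallel^2$. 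In particular $N(0) = c$.

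Second, I would observe that the minimizer of this quadratic is $\lambda = b/a$ and check that $b/a$ coincides with the stated $\lambda_m^\dagger$: factoring $\tfrac{L\gamma}{2}$ out of $a$ gives $a = \tfrac{L\gamma}{2}\big(\E_\xi\parallel \hat{g}_\xi\parallel^2 + \tfrac{2}{L\gamma}\parallel \hat{g}\parallel^2\big)$, so $b/a = \E_\xi[\langle \nabla\tilde{l}_\xi(\theta^*), \hat{g}_\xi\rangle] / \big(\E_\xi\parallel\hat{g}_\xi\parallel^2 + \tfrac{2}{L\gamma}\parallel\hat{g}\parallel^2\big) = \lambda_m^\dagger$. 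Evaluating the quadratic at its vertex via the standard identity $N(b/a) = c - b^2/a$ then yields $N(\lambda_m^\dagger)/N(0) = 1 - b^2/(ac)$. Substituting the expressions for $a$, $b$, $c$ and cancelling the common factor $(\tfrac{L\gamma}{2})^2$ from $b^2$ and $ac$ gives precisely
\[
\frac{N(\lambda_m^\dagger; \theta_{0:m}, \mathbf{w}_{0:m})}{N(0)}
= 1 - \frac{\big(\E_\xi[\langle \nabla \tilde{l}_\xi(\theta^*), \hat{g}_\xi(\theta_{0:m}, \mathbf{w}_{0:m})\rangle]\big)^2}{\big(\E_\xi\parallel \hat{g}_\xi(\theta_{0:m}, \mathbf{w}_{0:m})\parallel^2 + \tfrac{2}{L\gamma}\parallel \hat{g}(\theta_{0:m}, \mathbf{w}_{0:m})\parallel^2\big)\big(\E_\xi\parallel \nabla \tilde{l}_\xi(\theta^*)\parallel^2\big)},
\]
which is the claim.

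There is no genuine analytic obstacle here: the statement is a closed-form evaluation of a scalar quadratic at its vertex, so the entire content is algebraic bookkeeping. The one point requiring a little care is keeping the \emph{deterministic} full-gradient term $\parallel \hat{g}(\theta_{0:m}, \mathbf{w}_{0:m})\parallel^2$ (which sits outside the expectation and feeds only the $\lambda^2$ coefficient $a$) distinct from the \emph{stochastic} second moment $\E_\xi\parallel \hat{g}_\xi(\theta_{0:m}, \mathbf{w}_{0:m})\parallel^2$; conflating the two is the easy way to arrive at a wrong denominator. Since the result is quoted as Lemma~1 of \citep{safaryan2024knowledge}, I would also note in passing that $\lambda_m^\dagger$ being the vertex immediately forces $N(\lambda_m^\dagger) \le N(0)$, which is the monotonicity invoked in the proof of Theorem~\ref{thm:optimality}.
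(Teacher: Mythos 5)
Your proposal is correct. Note that the paper itself gives no proof of this lemma --- it is imported verbatim as Lemma~1 of \citep{safaryan2024knowledge} --- so there is no in-paper argument to compare against; your complete-the-square computation is the standard derivation, the identification $b/a=\lambda_m^\dagger$ and the cancellation of $(\tfrac{L\gamma}{2})^2$ in $b^2/(ac)$ both check out, and your closing observation that the vertex property yields $N(\lambda_m^\dagger)\leq N(0)$ is exactly how the lemma is used in the proof of Theorem~\ref{thm:optimality} (cf.\ \eqref{eq:reason:less_than_one}).
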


\section{Additional results and discussions}

\subsection{Empirical evidence for Theorem \ref{thm:temporal_error}}

\begin{figure}[h!]
     \centering
     \begin{subfigure}[b]{0.55\textwidth}
         \centering
         \includegraphics[width=0.49\textwidth]{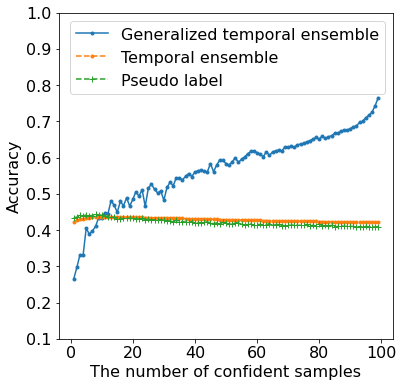}
         \includegraphics[width=0.49\textwidth]{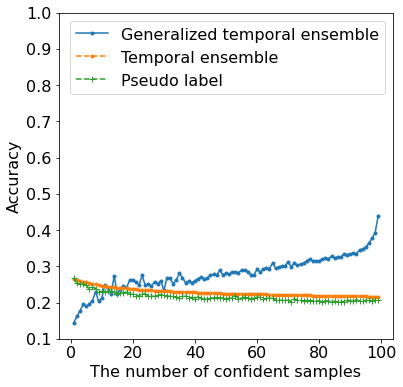}
         \caption{}
         \label{fig:supp_fig_a}
     \end{subfigure}
     \begin{subfigure}[b]{0.38\textwidth}
         \centering
         \includegraphics[width=\textwidth]{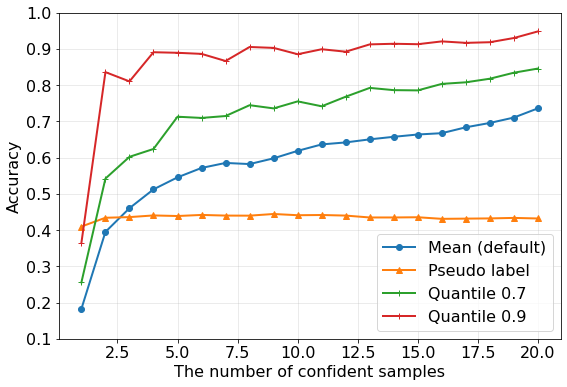}
         \caption{}
         \label{fig:supp_fig_b}
     \end{subfigure}
    \caption{(a) The accuracy of the generalized temporal ensemble along with the number of confident samples under different degrees of distribution shifts. Here, the temporal ensemble is constructed by averaging all predictions over iterations. 
    (b) On-average accuracies per the number of confident samples over iterations under different thresholding rules. }
    \label{result:support_figs}
\end{figure}

\subsection{Marginal distribution of pseudo labels over epochs}
\begin{figure}[h!]
     \centering
     \begin{subfigure}[b]{0.63\textwidth}
         \centering
         \includegraphics[width=\textwidth]{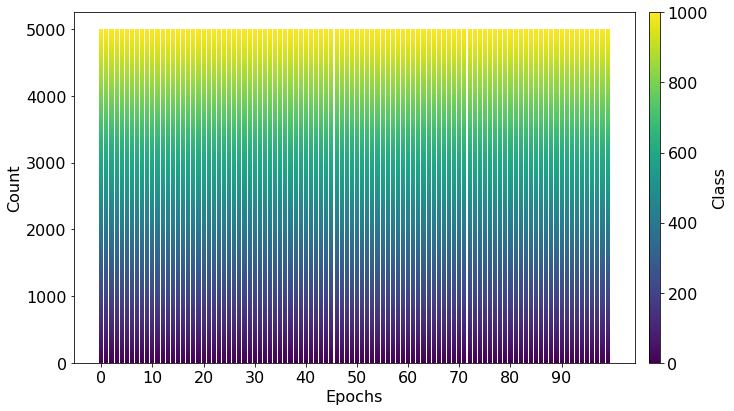}
         \caption{}
     \end{subfigure}
     \hfill
     \begin{subfigure}[b]{0.35\textwidth}
         \centering
         \includegraphics[width=\textwidth]{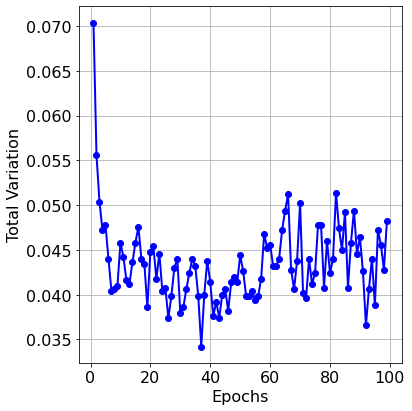}
         \caption{}
     \end{subfigure}
    \caption{(a) Counting the number of pseudo labels for each class with 5,000 training samples in ImageNet-C over 100 training epochs, which shows that the marginal distribution of pseudo labels barely changes during training. 
    (b) Changes in the total variation distance of the marginal distributions of the pseudo labels for each two consecutive epochs.}
    \label{result:marginal_figs}
\end{figure}

\begin{figure}
         \centering
     \begin{subfigure}[b]{0.496\textwidth}
         \centering
         \includegraphics[width=\textwidth]{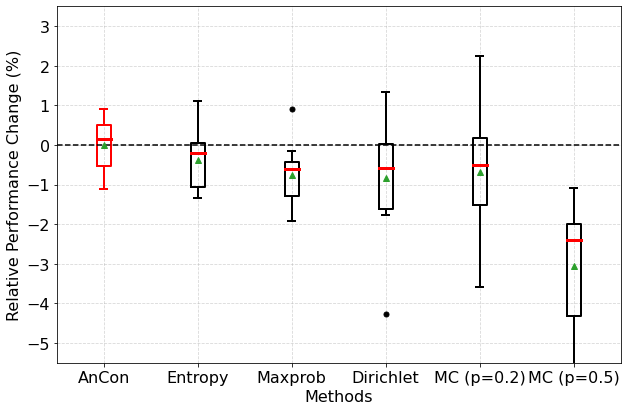}
         \caption{}
         \label{fig:abl:prob}
     \end{subfigure}
     \begin{subfigure}[b]{0.496\textwidth}
         \centering
         \includegraphics[width=\textwidth]{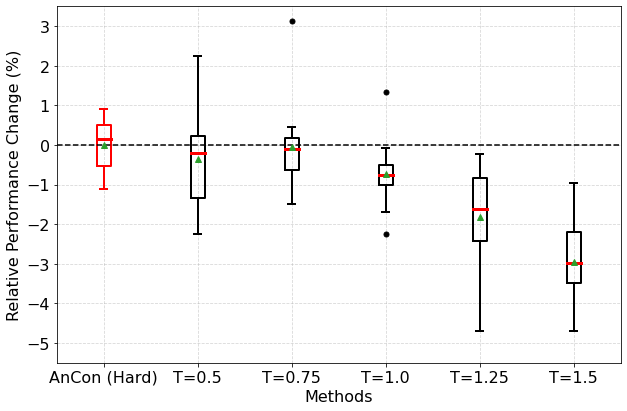}
         \caption{}
         \label{fig:abl:temp}
     \end{subfigure}
    \caption{Ablation study of (a) weighting and (b) prediction schemes.}
    \label{fig:ablation}
    \vskip -0.2in
\end{figure}

\subsection{Algorithmic design choices} \label{appx:alg_des_choice}
Here, we examine effectiveness of our design choices--the relative thresholding for weighting scheme $w_i(x)$ and hard prediction for $p(y|x, \theta_i)$. To this end, we replace our design choices with several alternatives, keeping other features of \texttt{AnCon} the same. Then, we analyze the changes in the performance compared to the default setting of \texttt{AnCon} in four domain pairs of OfficeHome.

\textbf{\textit{Sophisticated weighting scheme for $w_i(x)$}}
We compare our relative thresholding with sophisticated weighting schemes called Entropy ($w_i(x) \propto \exp\left\lbrace - H_E[f(x; \theta_i)]\right\rbrace$) and Maxprob ($w_i(x) \propto \max_{k \in [K]} f_k(x; \theta_i)$). These weighting schemes aim to more precisely weight the predictions based on the uncertainty of the prediction. However, in Figure \ref{fig:abl:prob}, the simple relative thresholding works better than these sophisticated weighting schemes. We conjecture that the poor calibration performance of the neural network in self-training under distribution shifts prevents the estimated uncertainty of a prediction from accurately reflecting the goodness of the prediction. Further, the ineffectiveness of these sophisticated weighting schemes persists even when we improve the calibration performance with last-layer Dirichlet \citep{joo2020being} and Monte-Carlo Dropout \citep{gal2016dropout} (see Appendix \ref{appx:further_details} for detail). Therefore, our relative thresholding is simple yet effective in self-training under distribution shift.

\textit{\textbf{Soft prediction for $p(y|x, \theta_i)$}}
We compare hard prediction vs soft prediction with a softmax temperature parameter $T \in \{ 0.5, 0.75, 1.0, 1.25, 1.5\}$ (cf. Appendix \ref{appx:further_details} for the definition). 
By design, soft prediction is more informative, giving values in non-leading entries. However, in \Figref{fig:abl:temp}, soft prediction turns out to underperform hard prediction for various values of $T$. We conjecture that the continuously increasing confidence in the later stage of self-training would make soft prediction ignore early-stage predictions which may be valuable to memorize. Thus, we believe that hard prediction would be more appropriate for \texttt{AnCon}. 

\subsection{Limitations and future directions} \label{appx:lim_future}
Albeit \texttt{AnCon} proving its effectiveness with a fixed value of $\lambda$, Theorem \ref{thm:temporal_error} suggests that its optimal value depends on the temporal ensemble performance. Thus, future works could aim for adaptively determining $\lambda$. In addition, we observe that all self-training based methods fall short without sophisticated model designs tailored for distribution shifts \citep{liang2020we, rusak2022if} (cf. Appendix \ref{sec:exp_uda}). In this regard, it would be valuable to rigorously understand properties of function classes that enable successful self-training under distribution shifts. Also, while \texttt{AnCon} significantly relaxes the conditions required to achieve optimality compared to LFN techniques, the on-average correct prediction condition still can be violated in challenging self-training scenarios. Given the compatibility of \texttt{AnCon} and NRC (cf. Table \ref{table:sfda:benchmark_all}), efficiently combining local and temporal consistencies could be a step toward further relaxing the optimality condition. Finally, we remark that the idea of selective temporal consistency in sequential decision-making scenarios could be a notable future direction of research. Extending \texttt{AnCon} to sequential decision-making scenarios presents significant challenges as they involve the fundamentally different mechanisms. In sequential decision-making, leveraging observed rewards and balancing exploitation and exploration are core aspects (e.g., constructing the upper confidence bound of the reward in the bandit problem), unlike in self-training. Therefore, the main challenges for applying \texttt{AnCon} to this setting would be defining rewards and incorporating exploration strategies into pseudo label generation.

\section{Additional details} \label{appx:additional_details}
\subsection{SFDA training configurations}
Our training configuration is based on \cite{liang2020we}. 
Specifically, we train ResNet-50 \citep{he2016deep} as a source classifier, which is used as an initial point for SFDA, for 50 epochs, with minibatch stochastic gradient descent with Nesterov momentum by using the initial learning rate of 0.01, the momentum parameter of 0.9, and the minibatch size of 64 in the Office Home dataset. The learning rate is decayed by $(1 + 10 \cdot \text{current iteration number / maximum number of iterations})^{-0.75}$, and for the pre-trained layers we use a 10 times smaller learning rate. For Office-31, we change the number of epochs by 100 and other configurations are the same as those used in the setting in Office Home. Similarly, we only change the number of epochs to 10, the learning rate to 0.001, and the architecture to ResNet-101 for VisDa. We split dataset into 90\% of the training set and 10\% of the validation set, and the best model is chosen based on the lowest error rate on the validation set. For adapting the pre-trained model in the target domain, we use the exactly same configurations, except reducing the number of epochs to 30 for Office-31 and OfficeHome and to 15 for VisDa. 

\subsection{ImageNet-C training configuration}
For applying self-training, we follow the training configuration used in \cite{rusak2022if}; 
specifically, given ResNet-50 pretrained on ImageNet, we perform self-training without threshold by running the stochastic gradient descent optimizer with the learning rate of 0.001 and the minibatch size of 100 for 20 epochs. We remark that other techniques such as momentum and  learning rate scheduling are not used.

\subsection{Further experimental details in analyses} \label{appx:further_details}

\paragraph{Definition of ECE}
\begin{equation}
    ECE(\theta; \gD) = \sum_{g=1}^{G} \tfrac{|\gG_g^\theta|}{|\gD|} |Acc(\gG_g^\theta) - Conf(\gG_g^\theta))|
\end{equation}
where $\gG_g^\theta := \{ x \in \gD | \tfrac{g}{G} \leq c(x; \theta) < \tfrac{g+1}{G} \}$, $Acc(\gG_g^\theta)$ is the average accuracy in $\gG_g^\theta$, and $Conf(\gG_g^\theta)$ is the average confidence in $\gG_g^\theta$.

\paragraph{Configuration of the sophisticated uncertainty representation methods}
We used the regularization coefficient of 0.001 for the belief matching loss \citep{joo2020being} and 20 forward passes to compute the posterior predictive distribution for MC-dropout with dropout probabilities 0.2 and 0.5 \citep{gal2016dropout}.

\paragraph{Definitions of model selection criteria}
\begin{itemize}
    \item InfoMax (higher is better)
\begin{equation}
    IM(\theta) = H(\E_X[f(X; \theta)]) - \E_X[H(f(X; \theta))]
\end{equation}
where $H(p) = - \sum_{k \in[K]} p_i \log p_i $ for $p \in \triangle^{K-1}$. Note that we abuse the notation $H(\cdot)$ with the definition of the cross-entropy; that is, $H(q, p) = - \sum_{k \in [K]} p_i \log q_i$ and $H(p) = - \sum_{k \in [K]} p_i \log p_i$.
\item Corr-C (lower is better)
\begin{equation}
    \text{Corr-C}(\theta) = C / (\parallel C \parallel_F / \sqrt{K}) , \quad C_{ij} = \sum_{n=1}^{b} f_i(x_n; \theta) f_j(x_n; \theta)
\end{equation}
where $C \in \R^{K \times K}$ and $\parallel \cdot \parallel_F$ is the Frobenius norm.
\item Ent (lower is better)
\begin{equation}
    \text{Ent}(\theta) = H(\E_X[f(X; \theta)]). 
\end{equation}
\end{itemize}

\paragraph{Softmax with a temperature parameter}
\begin{equation}
    p(y=j|x, \theta, T) = \phi_j(f^{-\phi}(x; \theta) / T)
\end{equation}
where $f^{-\phi}(x; \theta)$ are the logits of the neural network such that $f_i(x; \theta) = \frac{\exp(f^{-\phi}_i(x; \theta))}{\sum_{k \in [K]} \exp(f^{-\phi}_k(x; \theta))}$ for $i \in [K]$ and $\phi(x)$ is the softmax function.

\subsection{Computational resources for experiments}
In this work, we use multiple servers which consist of multiple GPUs including TITAN XP (12GB), RTX 8000 (50GB), and A100 (40GB). \texttt{AnCon} takes 1.16 seconds on average for each iteration.

\section{Result in UDA} \label{sec:exp_uda}
While SFDA does not assume any information, unlabeled target domain samples as domain shift information can be available during the source domain training process in some practical scenarios, e.g., when labeling target domain samples is costly. In this case, UDA methods are used to minimize the potential impact of distribution shifts by matching input marginal probability distributions of source and target domains. Given the prevalence of UDA scenarios in practice, we aim to show whether \texttt{AnCon} and other self-training methods can be effective for the UDA methods (conditional domain adversarial network (CDAN) \citep{long2018conditional} and maximum mean discrepancy (MDD) \citep{zhang2019bridging}) in the adaptation stage. However, as shown in Table \ref{uda_benchmark}, all methods decrease the performance of both CDAN and MDD for most cases, albeit the reduction rates in \texttt{AnCon} are smaller than self-training and ELR. We conjecture that this is because the base methods (CDAN and MDD) do not contain the sophisticated tricks, such as adding weight normalization to the final linear layer, which are used in the SFDA literature \citep{liang2020we}.

\begin{table}[t]
\centering
\caption{Accuracy across different domain pairs for each UDA method in OfficeHome.}
\label{uda_benchmark}
\scalebox{0.75}{
\begin{tabular}{@{}lccccccccccccc@{}}
\toprule
Method             & Ar2Rw & Ar2Pr & Ar2Cl & Rw2Ar & Rw2Pr & Rw2Cl & Pr2Ar & Pr2Rw & Pr2Cl & Cl2Ar & Cl2Rw & Cl2Pr & Avg \\ \midrule
CDAN & 77.48    &    71.77    &    \textbf{54.23}    &    \textbf{74.37}    &    83.60    &      \textbf{61.01}    &    61.39    &    \textbf{79.21}    &    \textbf{53.95}    &    62.26    &    71.08    &    69.88 & 68.35  \\
Self-training & 76.82 &   71.68  & 51.57 &   73.51  &  83.10   & 58.63  & 60.36 &   78.68  & 51.91  & 61.97  & 69.24  & 68.37  & 67.15 \\
+ELR  &  77.44 &   \textbf{71.91} & 47.01 &   73.96  &  83.53  & 59.98  & \textbf{61.68} &   78.88  & 51.39  & 62.88 &   70.81 &   70.33 & 67.48 \\
+\texttt{AnCon} & \textbf{78.22} &   71.53  & 53.59 &   74.08  &  \textbf{84.03}  & 60.23  & 61.23 &   78.79  & 53.81  & \textbf{62.92} &   \textbf{72.04} &   \textbf{70.44} & \textbf{68.41} \\
\midrule
MDD & \textbf{78.54}    &    \textbf{75.74}    &    \textbf{56.43}    &    73.71    &    \textbf{84.30}      &    \textbf{60.73}    &    \textbf{63.70}    &   \textbf{80.26}    &    52.99    &    \textbf{63.62}    &    \textbf{73.10}    &     \textbf{72.38}  & \textbf{69.24}
 \\
Self-training & 76.93 & 71.37 & 52.62 & 73.47 & 83.24 & 59.38 & 60.57 & 78.40 & 52.49 & 62.01 & 69.43 & 68.17 & 67.34 \\
+ELR & 77.30 & 71.93 & 47.12 & \textbf{74.00} & 83.53 & 59.84 & 62.05 & 78.75 & 50.81 & 62.30 & 70.53 & 70.29 & 67.37 \\
+\texttt{AnCon} & 77.67 & 71.84 & 53.24 & \textbf{74.00} & 83.96 & 60.18 & 60.86 & 79.07 & \textbf{53.38} & 62.75 & 71.63 & 70.56 & 68.26 \\
\bottomrule
\end{tabular}
}
\end{table}

\newpage 

\section{Additional tables}

\begin{table*}[h!]
\caption{Benchmark results in Office-31. The numbers indicate the mean test accuracy across three repetitions with boldface for the best score.}
\label{table:sfda:o31}
\centering
\begin{tabular}{l|rrrrrr|r}
\toprule
Method  & A-D & A-W & D-A & D-W & W-A  &  W-D &    Avg \\ 
\hline
Self-Training & 79.32 & 80.88 & 62.07 & 97.67 & 62.99 & 99.90 & 80.47 \\
+ELR & 79.02 & \textbf{81.19} & 63.70 & 97.61 & 63.63 & 99.90 & 80.84 \\
+\texttt{AnCon} & \textbf{82.23} & {79.94} & \textbf{63.99} & \textbf{97.80} & \textbf{64.29} & \textbf{100.00 }& \textbf{81.37} \\
\midrule
GCE & \textbf{86.85} & 86.16& 64.63 & \textbf{98.62}& 62.83 & \textbf{99.90} & 83.16  \\
+ELR & \textbf{86.85} & 86.54 & 64.80 & \textbf{98.62} & 62.48 & \textbf{99.90 }& 83.20  \\
+\texttt{AnCon} & 86.75  &  \textbf{87.11} &\textbf{65.78} & 98.49 & \textbf{62.99} & \textbf{99.90} & \textbf{83.50} \\
\midrule
NRC & 92.67  &88.11  &72.83  & 98.62  & 72.40  & 99.90 & 87.42 \\
+ELR & 92.77  & 87.92  & 72.77  & \textbf{98.68}  & \textbf{72.68}  & 99.90 & 87.46  \\
+\texttt{AnCon} & \textbf{94.28}  & \textbf{91.45}  & \textbf{74.49} & 98.62  & 75.56  &\textbf{100.00} & \textbf{89.07}  \\
\bottomrule
\end{tabular}
\end{table*}

\begin{table}[h!]
\centering
\caption{Benchmark results in OfficeHome. The numbers indicate the mean test accuracy across three repetitions with boldface for the best score.}
\label{table:sfda:oh}
\scalebox{0.75}{
\begin{tabular}{@{}lccccccccccccc@{}}
\toprule
Method             & Ar2Rw & Ar2Pr & Ar2Cl & Rw2Ar & Rw2Pr & Rw2Cl & Pr2Ar & Pr2Rw & Pr2Cl & Cl2Ar & Cl2Rw & Cl2Pr & {Average} \\ 
\midrule
Self-training   & 75.19 & 69.20 & 43.07 & 66.13 & 78.10 & 46.64 & 55.25 & 73.84 & 42.98 & 55.46 & 66.51 & 66.28 & {61.55}   \\
+ELR            & 75.97 & 70.47 & 45.80 & 67.00 & 78.64 & 50.01 & 55.38 & 75.42 & 44.56 & 56.61 & 69.15 & 67.38 & {63.03}   \\
+\texttt{AnCon} & \textbf{76.13} & \textbf{70.56} & \textbf{48.06} & \textbf{67.57} & \textbf{79.27} &\textbf{51.89 }& \textbf{55.62} & \textbf{76.22} &\textbf{44.86} & \textbf{58.34 }& \textbf{70.23} & \textbf{68.21} & \textbf{63.91}   \\
\midrule
GCE       &  74.65  &  69.69  &  43.46  &  68.87  &  79.25  &  49.18  &  60.01  &  75.14  &  43.78  &  57.44  &  68.85  &  65.77 & {63.00}   \\
+ELR   &  74.89  &  71.62  &  43.45  &  69.63  &  79.98  &  49.68  &  61.27  &  76.57  &  44.44  &  58.69  &  68.67  &  66.29 & 63.77 \\
+\texttt{AnCon}   & \textbf{76.47}  &  \textbf{71.80}  &  \textbf{44.83}  &  \textbf{70.21} & \textbf{80.15}  & \textbf{51.74 } &  \textbf{61.93}  &  \textbf{76.91 } & \textbf{46.69}  &  \textbf{59.21}  &  \textbf{70.09 } &  \textbf{67.74} & \textbf{64.81} \\
\midrule
NRC & 73.86 & 75.04 & 47.90 & 61.15 & 76.59 & 51.75 & 57.56 & 71.88 & 46.12 & 59.74 & 70.44 & 70.47 & {63.92} \\
+ELR & 76.82 & 76.01 & 52.19 & 66.01 & \textbf{80.40} & 55.69 & \textbf{60.82} & 74.87 & 49.62 & \textbf{63.21} & 73.56 & 72.65 & {66.89} \\
+\texttt{AnCon} & \textbf{79.64 }& \textbf{77.11} & \textbf{53.56 }&\textbf{69.47} & {80.04} &\textbf{57.30} & {59.99} & \textbf{77.37} & \textbf{50.42} &\textbf{63.21} & \textbf{75.95} & \textbf{74.97} & \textbf{67.96} \\
\bottomrule
\end{tabular}
}
\end{table}

\begin{table}[h!]
\centering
\caption{ECE benchmark results in OfficeHome. The numbers indicate the mean test ECE across three repetitions with boldface for the best score  (lower is better).}
\label{table:oh:ece}
\scalebox{0.74}{
\begin{tabular}{@{}lrrrrrrrrrrrrrr@{}}
\toprule
Method             & Ar2Rw & Ar2Pr & Ar2Cl & Rw2Ar & Rw2Pr & Rw2Cl & Pr2Ar & Pr2Rw & Pr2Cl & Cl2Ar & Cl2Rw & Cl2Pr & Avg \\ \midrule
Self-training      & 13.52 & 25.10 & 42.78 & 14.97 & 15.84 & 34.16 & \textbf{18.12} & 13.98 & 35.43 & 24.67 & 18.10 & 21.97 & {23.22}   \\
+ ELR            & 12.61 & 23.12 & 44.89 & 13.92 & 15.51 & 30.12 & 19.06 & 14.30 & 36.86 & 21.25 & 17.00 & 22.69 & {22.61}   \\
+ \texttt{AnCon} & \textbf{11.96} & \textbf{22.48} & \textbf{32.80} & \textbf{12.18} & \textbf{13.40} & \textbf{23.05} & 18.84 & \textbf{11.54} & \textbf{35.10} & \textbf{19.50} & \textbf{15.19} & \textbf{20.07} & \textbf{19.68}   \\
\midrule 
GCE      &  10.51  &  20.90  &  45.25  &  12.29  &  13.90  &  35.37  &  14.02  &  10.99  &  39.68  &  19.10  &  15.91  &  23.57
& 21.79 \\
+ELR   &  13.19  &  20.74  &  46.50  &  13.84  &  15.83  &  36.35  &  15.08  &  12.66  &  40.32  &  21.09  &  20.82  &  26.27
& 23.56 \\
+\texttt{AnCon}  & \textbf{9.63}  &  \textbf{18.02}  &  \textbf{39.20} & \textbf{10.63}& \textbf{13.54}  &  \textbf{28.46}  &  \textbf{11.24}  &  \textbf{9.54} &  \textbf{30.86}&  \textbf{16.95}  & \textbf{15.79}  & \textbf{21.47} & \textbf{19.52} \\
\bottomrule
\end{tabular}}
\end{table}
\vskip -0.1in

\begin{table}[h!]
\centering
\caption{Benchmark results in VisDa. The numbers indicate the mean test accuracy across three repetitions with boldface for the best score.}
\label{table:sfda:visda}
\scalebox{0.7}{
\begin{tabular}{@{}lrrrrrrrrrrrrr@{}}
\toprule
Method & Aeroplane & Bicycle & Bus & Car & Horse & Knife & Motorcycle & Person & Plant & Skateboard & Train & Truck & Avg \\ 
\midrule
Source-only & 52.74 & 10.50 & 36.16 & 48.15 & 48.24 & 5.83 & 82.80 & 15.03 & 52.43 & 16.57 & 92.75 & 7.41 & 38.71 \\
\midrule
Self-training & 89.69 & 53.15 & \textbf{87.08} & 57.71 & 88.34 & \textbf{99.23} & 88.39 & 77.15 & 73.38 & 16.13 & 79.86 & 0.00 & 67.77 \\
+ELR & \textbf{92.21} & \textbf{53.53} & 85.59 & \textbf{60.22} & \textbf{89.85} & 98.70 & \textbf{90.80} & 77.65 & 80.68 & 39.11 &\textbf{84.66} & 0.43 & \textbf{71.89} \\
+\texttt{AnCon} & 91.25 & 52.32 & 81.68 & 58.60 & 88.72 & 85.93 & 90.60 & \textbf{77.78} & \textbf{81.31} & \textbf{50.64} & 84.16 & \textbf{16.35} & 71.11 \\
\midrule 
GCE  & 93.88             & 0.00            & \textbf{89.62}        & 72.55        & 91.69          & \textbf{99.08}         & 90.11              & 79.80           & 84.68          & 0.04               & 81.37         & \textbf{0.00}         & 65.20           \\
+ELR       & 93.50             & 18.82            & 84.50        & 68.19        & 92.05          & 98.22         & 89.80              & 82.07           & 84.04          & \textbf{6.44}                & 84.89         & \textbf{0.00}         & 66.90           \\
+\texttt{AnCon}     & \textbf{95.15}             & \textbf{24.00}           & 86.20        & \textbf{73.26}        & \textbf{93.03}          & 98.36         & \textbf{91.60}              & \textbf{82.72}           & \textbf{88.46}          & 0.83                & \textbf{87.04}         & \textbf{0.00}          & \textbf{68.40}           \\ 
\midrule
NRC  & 30.17             & \textbf{89.50}            & \textbf{83.45}        & 65.20        & \textbf{95.16}          & 96.05         & 83.56              & 80.73           & 89.67          & \textbf{91.10}               & 86.76         & 30.68          & 74.30           \\
+ELR       & 75.40             & 83.97            & 78.78        & \textbf{67.62}        & 94.14          & \textbf{97.30}         & \textbf{88.87}              & \textbf{82.40}           & \textbf{92.15}          & 88.29               & \textbf{87.39}         & 57.03          & 82.80           \\
+\texttt{AnCon}     & \textbf{95.83}             & 87.05            & 79.66        & 61.67        & 94.39          & 95.57         & 86.06              & 81.18           & 90.83          & 89.30               & 85.84         & \textbf{57.41}          & \textbf{83.70}           \\
\bottomrule
\end{tabular}}
\end{table}
\vskip -0.1in

\begin{table}[h!]
    \caption{ImageNet-C experiments with corruption intensity 1. The numbers indicate the mean test accuracy across three repetitions with boldface for the best score. We only present the first five letter for each corruption type. The full names are available in \citep{hendrycks2019benchmarking}.}
    \label{tab:intens1}
    \scalebox{0.6}{
\begin{tabular}{lrrrrrrrrrrrrrrrrrr}
\toprule
Method &  Shot &  Impul &  Gauss &  Glass &  Motio &  Zoom &   Snow &      Frost &       Fog &  Brigh &   Contr &  Elast &  Pixel &  Jpeg &  Speck &   Satur &    Spatt  & Avg \\
\midrule
Self-training &   68.75 &      65.48 &       69.04 &   68.78 &    71.46 &  67.17 &  67.11 &  67.60 &  71.14 &       75.00 &  72.77 &          70.83 &     72.92 &             71.24 &      70.36 &  \textbf{74.24} &  73.96 &  70.46 \\
ELR &   68.76 &      65.70 &       69.14 &   69.05 &    71.54 &  67.25 &  67.32 &  67.78 &  71.19 &       74.88 &  72.87 &          70.81 &     72.92 &             71.20 &      70.52 &  74.21 &  74.00 &  70.54 \\
\texttt{AnCon} &   \textbf{69.11} &      \textbf{65.87} &       \textbf{69.31} &   \textbf{69.28} &    \textbf{71.63} &  \textbf{67.40}&  \textbf{67.38} & \textbf{68.04} & \textbf{71.34} &      \textbf{75.12} &  \textbf{73.10} &          \textbf{71.08}&    \textbf{73.10} &             \textbf{71.34} &     \textbf{70.64}& {74.23}&  \textbf{74.20}&  \textbf{70.72} \\
\bottomrule
\end{tabular}
}
\end{table}

\begin{table}[h!]
    \caption{ImageNet-C experiments with corruption intensity 2. The numbers indicate the mean test accuracy across three repetitions with boldface for the best score.}
    \label{tab:intens2}
    \scalebox{0.6}{
\begin{tabular}{lrrrrrrrrrrrrrrrrrr}
\toprule
Method &  Shot &  Impul &  Gauss &  Glass &  Motio &  Zoom &   Snow &      Frost &       Fog &  Brigh &   Contr &  Elast &  Pixel &  Jpeg &  Speck &   Satur &    Spatt  & Avg \\
\midrule
Self-training &       63.82 &      60.36 &       64.85 &   62.21 &        68.08 &      63.33 &  59.00 &  57.62 &  69.84 &   74.09 &  71.51 &          57.85 &     72.16 &             69.03 &      67.97 &     72.79 &  70.19 &  66.16 \\
ELR &       \textbf{64.06} &      \textbf{60.57}&       \textbf{64.99}&   63.27 &        68.27 &      63.42 &  \textbf{59.26}&  58.12 &  70.00 &   74.02 &  71.54 &          58.34 &     72.24 &             69.13 &      68.21 &     72.71 &  70.32 &  66.38 \\
\texttt{AnCon} &       \textbf{64.06} &      60.38 &       64.90 &   \textbf{63.53} &       \textbf{68.31} &      \textbf{63.48} &  58.94 &  \textbf{58.55} &  \textbf{70.14} &   \textbf{74.12} &  \textbf{71.71} &          \textbf{58.42} &    \textbf{72.44} &           \textbf{69.29}&    \textbf{68.29} &     \textbf{72.84} &  \textbf{70.47} &  \textbf{66.46} \\
\bottomrule
\end{tabular}
}
\end{table}

\begin{table}[h!]
    \caption{ImageNet-C experiments with corruption intensity 3. The numbers indicate the mean test accuracy across three repetitions with boldface for the best score.}
    \label{tab:intens3}
    \scalebox{0.6}{
\begin{tabular}{lrrrrrrrrrrrrrrrrrr}
\toprule
Method &  Shot &  Impul &  Gauss &  Glass &  Motio &  Zoom &   Snow &      Frost &       Fog &  Brigh &   Contr &  Elast &  Pixel &  Jpeg &  Speck &   Satur &    Spatt  & Avg \\
\midrule
Self-training & 56.90    & 55.37    & 56.76    & 26.37    & 57.11    & 58.38    & 59.22    & 3.92    & 67.71    & 72.93    & 66.68    &\textbf{70.65}    & 69.17    & 67.21    & 60.11    & 74.26    & 66.13  & 58.17 \\
ELR &    \textbf{57.33}    & \textbf{55.88}    & \textbf{57.21}    & 45.78    & \textbf{63.00}    & 60.35    & \textbf{59.61}   & 43.46    & \textbf{67.86}    & 72.92    & 69.20    & \textbf{70.65}   & \textbf{69.35}    & 67.25    & \textbf{60.33}   & 74.22    &\textbf{66.33} & 62.39 \\
\texttt{AnCon} &    56.96    & 55.67    & 57.01    & \textbf{48.85}   & 62.64    &\textbf{60.64}   & 59.33    &\textbf{49.81}   & 66.41    & \textbf{73.05}    &\textbf{69.37}  & 70.10    & 67.94    & \textbf{67.46}    & 60.13    & \textbf{74.39} & 64.50 & \textbf{62.60 }\\
\bottomrule
\end{tabular}
}
\end{table}

\begin{table}[h!]
    \caption{ImageNet-C experiments with corruption intensity 4. The numbers indicate the mean test accuracy across three repetitions with boldface for the best score.}
    \label{tab:intens4}
    \scalebox{0.6}{
\begin{tabular}{lrrrrrrrrrrrrrrrrrr}
\toprule
Method &  Shot &  Impul &  Gauss &  Glass &  Motio &  Zoom &   Snow &      Frost &       Fog &  Brigh &   Contr &  Elast &  Pixel &  Jpeg &  Speck &   Satur &    Spatt  & Avg \\
\midrule
Self-training &       14.03 &          41.54 &           26.46 &        0.61 &        54.14 &      39.06 &  52.56 &   2.83 &  65.73 &       71.21 &     41.71 &              67.08 &     64.52 &             62.36 &          53.28 &     70.06 &    61.77 &  46.41 \\
ELR &       36.10 &          43.32 &          \textbf{45.07} &       32.12 &        \textbf{54.45}&      56.46 & \textbf{52.98} &  41.60 &  \textbf{65.90} &       71.23 &     58.27 &              \textbf{67.91} &    \textbf{64.78} &            {62.47} &          \textbf{54.45} &     \textbf{70.26} &    \textbf{62.41} &  55.28 \\
\texttt{AnCon} &       \textbf{40.36} &          \textbf{43.71} &           45.03 &      \textbf{40.96}&        54.16 &     \textbf{56.81} &  52.46 & \textbf{46.66} &  64.20 &       \textbf{71.43}&     \textbf{59.42}&              66.67 &     62.37 &            \textbf{62.55}&          53.66 &     69.34 &    62.34 &  \textbf{56.01} \\
\bottomrule
\end{tabular}

}
\end{table}

\begin{table}[h!]
    \caption{ImageNet-C experiments with corruption intensity 5. The numbers indicate the mean test accuracy across three repetitions with boldface for the best score.}
    \label{tab:intens5}
    \scalebox{0.6}{
\begin{tabular}{lrrrrrrrrrrrrrrrrrr}
\toprule
Method &  Shot &  Impul &  Gauss &  Glass &  Motio &  Zoom &   Snow &      Frost &       Fog &  Brigh &   Contr &  Elast &  Pixel &  Jpeg &  Speck &   Satur &    Spatt  & Avg \\
\midrule
Self-training &        0.26 &           1.72 &            1.04 &        0.57 &         0.40 &      16.11 &  34.85 &   2.63 &  60.23 &       69.14 &      0.30 &              45.69 &     61.32 &             50.50 &          33.28 &     64.70 &    51.34 &  29.06 \\
ELR &        8.00 &          14.12 &           16.00 &        4.32 &        39.59 &      50.28 &  51.14 &  33.90 &  \textbf{60.54} &       \textbf{69.22}&      0.37 &              56.52 &    \textbf{61.71} &            \textbf{55.17} &          \textbf{45.75} &    \textbf{64.98} &   \textbf{54.47} &  40.36 \\
\texttt{AnCon} &      \textbf{22.56} &         \textbf{26.56} &          \textbf{25.85} &      \textbf{19.21} &      \textbf{45.91} &     \textbf{52.80} & \textbf{51.30} &  \textbf{39.02 }&  58.06 &       68.24 &  \textbf{9.24} &             \textbf{57.00} &     61.48 &             54.96 &          44.76 &     62.68 &    54.29 &  \textbf{44.35} \\
\bottomrule
\end{tabular}
}
\end{table}

\newpage

\: 

\newpage

\section*{NeurIPS Paper Checklist}
\begin{enumerate}

\item {\bf Claims}
    \item[] Question: Do the main claims made in the abstract and introduction accurately reflect the paper's contributions and scope?
    \item[] Answer: \answerYes{} 
    \item[] Justification: In Section \ref{sec:introduction}, we clearly state this work's scope and contributions, which are shown in Theorem \ref{thm:temporal_error}, Theorem \ref{thm:optimality}, and extensive experiments in Section \ref{sec:4experiments}. 

\item {\bf Limitations}
    \item[] Question: Does the paper discuss the limitations of the work performed by the authors?
    \item[] Answer: \answerYes{} 
    \item[] Justification: We include "Limitations and future directions" in Section \ref{sec:conclusion} and discuss implications of the assumptions for theoretical results.

\item {\bf Theory Assumptions and Proofs}
    \item[] Question: For each theoretical result, does the paper provide the full set of assumptions and a complete (and correct) proof?
    \item[] Answer: \answerYes{} 
    \item[] Justification: We have added all assumptions in the statements of Theorems \ref{thm:temporal_error} and \ref{thm:optimality} with proofs in Appendix. 

    \item {\bf Experimental Result Reproducibility}
    \item[] Question: Does the paper fully disclose all the information needed to reproduce the main experimental results of the paper to the extent that it affects the main claims and/or conclusions of the paper (regardless of whether the code and data are provided or not)?
    \item[] Answer: \answerYes{}{} 
    \item[] Justification: We include training configurations of all experiments in Appendix \ref{appx:additional_details}. 

\item {\bf Open access to data and code}
    \item[] Question: Does the paper provide open access to the data and code, with sufficient instructions to faithfully reproduce the main experimental results, as described in supplemental material?
    \item[] Answer: \answerYes{} 
    \item[] Justification: We release our code at \url{https://github.com/tjoo512/ancon}. 

\item {\bf Experimental Setting/Details}
    \item[] Question: Does the paper specify all the training and test details (e.g., data splits, hyperparameters, how they were chosen, type of optimizer, etc.) necessary to understand the results?
    \item[] Answer: \answerYes{} 
    \item[] Justification: We include training configurations of all experiments in Appendix \ref{appx:additional_details}. 

\item {\bf Experiment Statistical Significance}
    \item[] Question: Does the paper report error bars suitably and correctly defined or other appropriate information about the statistical significance of the experiments?
    \item[] Answer: \answerYes{} 
    \item[] Justification: We have included error bars with box plots for experiments that have high variance across different random seeds.

\item {\bf Experiments Compute Resources}
    \item[] Question: For each experiment, does the paper provide sufficient information on the computer resources (type of compute workers, memory, time of execution) needed to reproduce the experiments?
    \item[] Answer: \answerYes{} 
    \item[] Justification: We provide computational resources used in the research in the Appendix. 
    
\item {\bf Code Of Ethics}
    \item[] Question: Does the research conducted in the paper conform, in every respect, with the NeurIPS Code of Ethics \url{https://neurips.cc/public/EthicsGuidelines}?
    \item[] Answer: \answerYes{} 
    \item[] Justification: We thoroughly read the NeurIPS Code of Ethics and have confirmed that our research does not violate any of them.

\item {\bf Broader Impacts}
    \item[] Question: Does the paper discuss both potential positive societal impacts and negative societal impacts of the work performed?
    \item[] Answer: \answerNA{} 
    \item[] Justification: This work considers general principle of promoting temporal consistency for improving self-training under distribution shifts. Therefore, it would not have direct societal impacts, which are contributed particularly by our work. 

\item {\bf Safeguards}
    \item[] Question: Does the paper describe safeguards that have been put in place for responsible release of data or models that have a high risk for misuse (e.g., pretrained language models, image generators, or scraped datasets)?
    \item[] Answer: \answerNA{} 
    \item[] Justification: Our experiments concern only standard models with publicly available benchmarks.

\item {\bf Licenses for existing assets}
    \item[] Question: Are the creators or original owners of assets (e.g., code, data, models), used in the paper, properly credited and are the license and terms of use explicitly mentioned and properly respected?
    \item[] Answer: \answerYes{} 
    \item[] Justification: We cite papers related to code, data, and models used in our experiments. 

\item {\bf New Assets}
    \item[] Question: Are new assets introduced in the paper well documented and is the documentation provided alongside the assets?
    \item[] Answer: \answerNA{} 
    \item[] Justification: This paper does not include the new assets.

\item {\bf Crowdsourcing and Research with Human Subjects}
    \item[] Question: For crowdsourcing experiments and research with human subjects, does the paper include the full text of instructions given to participants and screenshots, if applicable, as well as details about compensation (if any)? 
    \item[] Answer: \answerNA{} 
    \item[] Justification: This paper does not include crowdsourcing or research with human subjects.

\item {\bf Institutional Review Board (IRB) Approvals or Equivalent for Research with Human Subjects}
    \item[] Question: Does the paper describe potential risks incurred by study participants, whether such risks were disclosed to the subjects, and whether Institutional Review Board (IRB) approvals (or an equivalent approval/review based on the requirements of your country or institution) were obtained?
    \item[] Answer: \answerNA{} 
    \item[] Justification: This paper does not include crowdsourcing or research with human subjects.

\end{enumerate}

\end{document}